\theoremstyle{definition}
\newtheorem{theorem}{Theorem}
\newtheorem{assumption}{Assumption}
\newtheorem{lemma}{Lemma}
\newtheorem{remark}{Remark}
\newtheorem{problem}{Problem}
\newtheorem{fact}{Fact}
\newcommand{\Lim}{\displaystyle\lim}
\title{
    Extending First-order Robotic Motion Planners to Second-order Robot Dynamics
} 
\author{Mayur Sawant and Abdelhamid Tayebi  
	\thanks{This work was supported by the National Sciences and Engineering Research Council of Canada (NSERC), under the grants RGPIN-2020-06270, RGPIN-2020-0644 and RGPIN-2020-04759. }
	\thanks{M. Sawant and A. Tayebi are with the Department of Electrical and Computer Engineering, Lakehead University, Thunder Bay, ON P7B 5E1, Canada. (e-mail: {\tt\small msawant, atayebi@lakeheadu.ca}).}%
}%
\begin{document}
\maketitle


\begin{abstract}
This paper extends first-order motion planners to robots governed by second-order dynamics. 
Two control schemes are proposed based on the knowledge of a scalar function whose negative gradient aligns with a given first-order motion planner. 
When such a function is known, the first-order motion planner is combined with a damping velocity vector with a dynamic gain to extend the safety and convergence guarantees of the first-order motion planner to second-order systems. 
If no such function is available, we propose an alternative control scheme ensuring that the error between the robot's velocity and the first-order motion planner converges to zero. The theoretical developments are supported by simulation results demonstrating the effectiveness of the proposed approaches.
\end{abstract}


\section{Introduction}
\IEEEPARstart{A}{utonomous} robot navigation involves steering a robot to a desired target location while avoiding obstacles. A widely explored class of methods for this task is based on artificial potential fields \cite{khatib1986real}, where an attractive vector field directs the robot toward the target, while a repulsive vector field ensures obstacle avoidance. However, these methods can suffer from the presence of undesired local minima in certain obstacle configurations.
The navigation function (NF)-based approach \cite{koditschek1990robot} restricts the influence of the repulsive field within a neighborhood of the obstacle by means of a properly tuned parameter, ensuring almost\footnote{Almost global convergence here refers to the convergence from all initial conditions except a set of zero Lebesgue measure.} global convergence to the target.
While the NF-based approach in \cite{koditschek1990robot} is applicable in sphere world environments, its applicability has been extended to more general settings, including convex and star-shaped obstacles using diffeomorphic transformations \cite{koditschek1992exact, li2018navigation} and sufficiently curved obstacles \cite{filippidis2012navigation, paternain2017navigation}.

The authors in \cite{arslan2019sensor} proposed a reactive feedback control design, based on the separating hyperplanes approach, for autonomous navigation in unknown environments with convex obstacles satisfying some curvature conditions. 
In \cite{kumar2022navigation}, the authors relax the curvature restrictions imposed in \cite{paternain2017navigation} and \cite{arslan2019sensor} for environments with ellipsoidal obstacles by constructing a repulsive vector field that pushes the robot away from the center of the obstacle, rather than from the closest point on its boundary

\IEEEpubidadjcol
Despite their effectiveness in navigating around obstacles with complex geometries, most of these navigation strategies assume that the robot motion is governed by a velocity-controlled first-order model.  However, many practical robotic systems are modeled by second-order dynamics.
Some research works have proposed navigation strategies directly applicable to second-order systems \cite{wang2017safety, verginis2021adaptive, stavridis2017dynamical, smaili2024dissipative}, but these approaches are often restricted in terms of the types of obstacle geometries they can handle.


\begin{figure}
    \centering
    \includegraphics[width=1\linewidth]{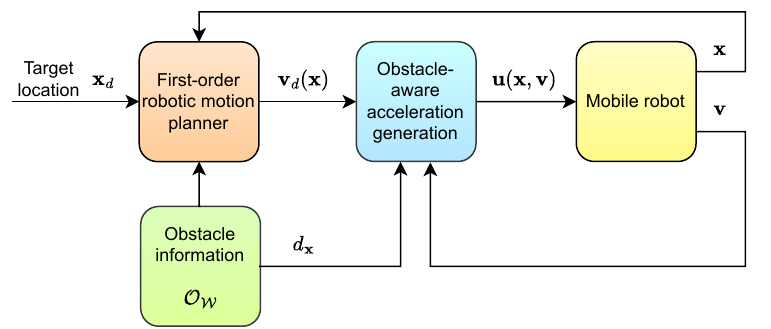}
    \caption{Block diagram illustrating the adaptation of first order motion planners to second-order dynamics via obstacle-aware damping.}
    \label{block_diagram:general}
\end{figure}


In \cite{wang2017safety}, a control barrier function-based method is introduced for multi-robot navigation in two-dimensional environments with circular obstacles, focusing on robots governed by second-order dynamics. 
In \cite{verginis2021adaptive}, the NF framework is extended to enable safe navigation for robots governed by uncertain second-order dynamics in known sphere worlds.
Similar to \cite{khatib1986real}, the proposed NF grows unbounded as the robot approaches the obstacle boundaries, and its negative gradient, combined with adaptive control techniques, is used to design a feedback control law that guarantees a safe almost global convergence of the robot to the desired location. 
In \cite{stavridis2017dynamical}, a second-order robotic system with the target as a unique globally asymptotically
stable equilibrium is modified in the presence of known obstacles, modelled as three-dimensional superellipsoids, by an additive signal whose design is based on the prescribed performance
control methodology. 
However, the introduction of this additive signal, similar to \cite{khatib1986real}, can lead to undesired local minima in certain obstacle configurations.
Recently, in \cite{smaili2024dissipative}, the authors proposed a feedback control design for autonomous navigation of a robot governed by second-order dynamics in environments with sufficiently separated obstacles that satisfy some specific curvature conditions. 

Although these approaches directly apply to second-order systems, their applicability is often limited by the types of obstacle geometries they can handle. In contrast, first-order motion planners, such as those in \cite{filippidis2012navigation, kumar2022navigation}, can ensure safe navigation in environments with obstacles of complex geometries. This highlights the need for a control scheme that extends the applicability of first-order motion planners to robots governed by second-order dynamics.

In \cite{arslan2017smooth}, the authors used the reference governor architecture to extend the applicability of first-order motion planners to robots modeled by second-order dynamics. The system is composed of three state variables: the robot's position, velocity, and a virtual governor variable. The dynamics of  governor variable are influenced by the first-order motion planner and the system's total energy. 
The robot tracks this governor variable, which guides it to the desired target location while ensuring obstacle avoidance.
Recently, in \cite{icsleyen2022low}, the reference governor architecture was used to extend the applicability of first-order motion planners to robots governed by  $n$-th order dynamics.
In both \cite{arslan2017smooth} and \cite{icsleyen2022low}, the reference governor-based navigation scheme requires executing the first-order motion planner at the virtual governor state rather than the robot's physical location. This approach necessitates access to either a global map of the first-order motion planner or additional computation to transform the robot-centered sensor measurements into measurements relative to the virtual governor.
 
In this paper, two control schemes are proposed based on the knowledge of a scalar function whose negative gradient aligns with a given first-order motion planner. 
When such a function is known, the first-order motion planner is combined with a damping velocity vector with a dynamic gain
to extend the safety and convergence guarantees of the first-order motion planner to second-order systems. 
If no such function is available, an alternative control scheme
ensures that the error between the robot's velocity and the first-order motion planner converges to zero, provided that the first-order motion planner is continuously differentiable. 
The general block diagram of the proposed control schemes is depicted in Fig. \ref{block_diagram:general}.
The main contributions of this paper are as follows:
\begin{enumerate}
\item The proposed control schemes extend the safety and convergence guarantees of the first-order motion planners such as \cite{kumar2022navigation} to second-order systems, enabling navigation in environments with complex obstacle geometries (\textit{e.g.,} ellipsoidal obstacles that are either very close to the target location or have an extremely flattened shape), that existing second-order motion planners \cite{khatib1986real, smaili2024dissipative} cannot handle.

\item Unlike \cite{khatib1986real} and \cite{verginis2021adaptive}, the proposed control schemes guarantee safety and almost global asymptotic stability of the target state for robots with second-order dynamics, without requiring the artificial potential function (APF) 
to tend to infinity as the robot approaches the obstacle boundaries.


    \item The reference-governor-based solutions proposed in \cite{arslan2017smooth} and \cite{icsleyen2022low} impose restrictions on the robot's initial velocity, which depend on its proximity to nearby obstacles and the virtual governor state. In contrast, the present work does not impose any such restrictions on the initial velocity of the robot and guarantees its safety and almost global asymptotic stability of the target state.

\end{enumerate}


The rest of the paper is organized as follows. 
Section \ref{section:notations} introduces the notations and mathematical preliminaries used throughout the paper, and Section \ref{section:problem_statements} formulates the problem.
In Section \ref{section:feedback_control_design}, we present two feedback controller designs that extend the first-order motion planners to the second-order dynamical systems.
Section \ref{section:simulations} demonstrates the effectiveness of the proposed control schemes through non-trivial simulation studies. Finally, concluding remarks are provided in Section \ref{section:conclusion}.

\section{Notations and preliminaries}\label{section:notations}
The sets of real numbers, non-negative real numbers, positive real numbers, and natural numbers are denoted by $\mathbb{R}$, $\mathbb{R}_{\geq 0}$, $\mathbb{R}_{>0}$, and $\mathbb{N}$, respectively. 
Bold lowercase letters are used to represent vector quantities. 
Given a complex number $z = a \pm bj$, where $a, b\in\mathbb{R}$ and $j = \sqrt{-1}$, the absolute value of $z$ is given by $|z| = \sqrt{a^2 + b^2}$.
Additionally, we use $\mathbf{Re}(z)$ and $\mathbf{Im}(z)$ to denote the real and imaginary parts of $z$, respectively.
Given a vector $\mathbf{a}\in\mathbb{R}^n$, the closed Euclidean ball of radius $r>0$ with its center at $\mathbf{a}$ is given by $\mathcal{B}_{r}(\mathbf{a}) = \{\mathbf{b}\in\mathbb{R}^n|\|\mathbf{b} - \mathbf{a}\|\leq r\}$, where $\|\cdot\|$ represents the Euclidean norm.
The set of $n$-dimensional unit vectors is defined as $\mathbb{S}^{n-1} = \{\mathbf{a}\in\mathbb{R}^n|\|\mathbf{a}\| = 1\}$.
The identity matrix and the zero matrix of dimension $n\in\mathbb{N}$ are denoted by $\mathbf{I}_n$ and $\mathbf{0}_{n}$, respectively. 
The Frobenius norm of a matrix $\mathbf{A}\in\mathbb{R}^{n\times n}$ is defined as $\|\mathbf{A}\|_{\mathbf{F}} = \sqrt{\underset{i}{\sum}\underset{k}{\sum}a_{ik}^2}$, where $a_{ik}$ denotes the element in the $i$-th row and $k$-th column of $\mathbf{A}$.

For sets $\mathcal{A}, \mathcal{B}\subset\mathbb{R}^n$, the relative complement of $\mathcal{B}$ with respect to $\mathcal{A}$ is given by $\mathcal{A}\setminus\mathcal{B} = \{\mathbf{a}\in\mathcal{A}|\mathbf{a}\notin\mathcal{B}\}$.
Given a set $\mathcal{A}\subset\mathbb{R}^n$, the symbols $\bar{\mathcal{A}}, \mathcal{A}^{\circ}, \mathcal{A}^c, \text{ and } \partial\mathcal{A}$ represent the closure, interior, complement and the boundary of $\mathcal{A}$, where $\partial\mathcal{A} = \bar{\mathcal{A}}\setminus\mathcal{A}^{\circ}$. 
Given $\mathcal{A}\subset\mathbb{R}^n$, the cardinality of $\mathcal{A}$ is denoted by $\mathrm{card}(\mathcal{A})$.
The Minkowski sum of two sets $\mathcal{A}, \mathcal{B}\subset\mathbb{R}^n$, denoted by $\mathcal{A}\oplus\mathcal{B}$, and it is defined as $\mathcal{A}\oplus\mathcal{B} = \{\mathbf{a} + \mathbf{b}|\mathbf{a}\in\mathcal{A}, \mathbf{b}\in\mathcal{B}\}$.
The dilation of a set $\mathcal{A}\subset\mathbb{R}^n$ by $r>0$ is given as $\mathcal{D}_r(\mathcal{A}) := \mathcal{A}\oplus\mathcal{B}_{r}(\mathbf{0})$. 
Given $r>0$ and $\mathcal{A}\subset\mathbb{R}^n$, the $r-$neighbourhood of $\mathcal{A}$ is denoted by $\mathcal{N}_{r}(\mathcal{A})$, and is given by $\mathcal{N}_{r}(\mathcal{A}) = \mathcal{D}_{r}(\mathcal{A})\setminus\bar{\mathcal{A}}$.

Given a vector $\mathbf{p} = [p_1, p_2, \ldots, p_n]^\top\in\mathbb{R}^n$ and a vector-valued function $\mathbf{f}(\mathbf{p}) = [f_1(\mathbf{p}), f_2(\mathbf{p}), \ldots, f_n(\mathbf{p})]^\top$ with $f_i(\mathbf{p})$ being a continuously differentiable mapping $f_i:\mathcal{A}\to\mathbb{R}$ for all $i\in\{1, 2, \ldots, n\}$, with $\mathcal{A}\subset\mathbb{R}$, the gradient of $\mathbf{f}(\mathbf{p})$ with respect to $\mathbf{p}$ is evaluated as
\begin{equation*}
    \nabla_{\mathbf{p}}\mathbf{f}(\mathbf{p}) = \left[\nabla_{\mathbf{p}}f_1, \nabla_{\mathbf{p}}f_2,\ldots, \nabla_{\mathbf{p}}f_n\right],
\end{equation*}
where $\nabla_{\mathbf{p}}f_i = \left[\frac{\partial f_i}{\partial p_1}, \frac{\partial f_i}{\partial p_2}, \ldots, \frac{\partial f_i}{\partial p_n}\right]^\top$ for all $i\in\{1, 2, \ldots, n\}$.
Given a twice continuously differentiable function $g:\mathcal{A}\to\mathbb{R}$, where $\mathcal{A}\subset\mathbb{R}^n$, the Hessian of $g$ at $\mathbf{p}\in\mathcal{A}$ is given by $\nabla_{\mathbf{p}}^2g(\mathbf{p}) = \nabla_{\mathbf{p}}(\nabla_{\mathbf{p}}g(\mathbf{p}))^\top$.
 


\subsection{Distance to a set}\label{definition_distance_to_a_set}
The distance between a point $\mathbf{a}\in\mathbb{R}^n$ and a closed set $\mathcal{A}\subset\mathbb{R}^n$ is denoted by $d(\mathbf{a}, \mathcal{A})$ and is evaluated as
\begin{equation*}
    d(\mathbf{a}, \mathcal{A}) := \underset{\mathbf{b}\in\mathcal{A}}{\min}\|\mathbf{a} - \mathbf{b}\|.
\end{equation*}
The set $\mathcal{P}_{\mathcal{J}}(\mathbf{a}, \mathcal{A})$, which is defined as
\begin{equation*}
    \mathcal{P}_{\mathcal{J}}(\mathbf{a}, \mathcal{A}) := \{\mathbf{b}\in\mathcal{A}\mid  \|\mathbf{a} - \mathbf{b}\|=d(\mathbf{a}, \mathcal{A})\},
\end{equation*}
contains all points in the set $\mathcal{A}$ that are at a distance $d(\mathbf{a}, \mathcal{A})$ from $\mathbf{a}$. 
When $\mathrm{card}(\mathcal{P}_{\mathcal{J}}(\mathbf{a}, \mathcal{A})) = 1$, the element in the set $\mathcal{P}_{\mathcal{J}}(\mathbf{a} ,\mathcal{A})$, which represents the unique closest point in $\mathcal{A}$ to $\mathbf{a}$, is denoted by $\Pi(\mathbf{a}, \mathcal{A})$.

\subsection{Geometric sets}

\subsubsection{Hyperplane}\label{definition_hyperplane} The hyperplane passing through $\mathbf{x}\in\mathbb{R}^n$ and perpendicular to $\mathbf{p}\in\mathbb{R}^n\setminus\{\mathbf{0}\}$ is defined as
\begin{equation}
    \mathcal{H}(\mathbf{x}, \mathbf{p}) := \{\mathbf{q}\in\mathbb{R}^n\mid(\mathbf{q} - \mathbf{x})^\top\mathbf{p} = 0\}.\label{equation_hyperplane}
\end{equation}
The closed positive half-space and the closed negative half-space, denoted by $\mathcal{H}_{\geq}(\mathbf{x}, \mathbf{p})$ and $\mathcal{H}_{\leq}(\mathbf{x}, \mathbf{p})$, respectively, are obtained by replacing `$=$' on the right in \eqref{equation_hyperplane} with `$\geq$' and `$\leq$', respectively. 
We also use the notations $\mathcal{H}_{>}(\mathbf{p}, \mathbf{q})$ and $\mathcal{H}_{<}(\mathbf{p} ,\mathbf{q})$ to denote the open positive and the open negative half-spaces such that $\mathcal{H}_{>}(\mathbf{p}, \mathbf{q}) = \mathcal{H}_{\geq}(\mathbf{p}, \mathbf{q})\backslash\mathcal{H}(\mathbf{p}, \mathbf{q})$ and $\mathcal{H}_{<}(\mathbf{p} ,\mathbf{q})= \mathcal{H}_{\leq}(\mathbf{p}, \mathbf{q})\backslash\mathcal{H}(\mathbf{p}, \mathbf{q})$.

\section{Problem statement}\label{section:problem_statements}
We are interested in extending first-order motion planners to second-order systems while preserving safety and stability properties. In other words, we aim at designing a feedback control law $\mathbf{u}$ for the second-order system
\begin{equation}\label{equation:second_order_system}
    \begin{aligned}
        \dot{\mathbf{x}} &= \mathbf{v},\\
        \dot{\mathbf{v}} &= \mathbf{u},
    \end{aligned}
\end{equation}
guaranteeing safety and asymptotic stability of the equilibrium $(\mathbf{x}=\mathbf{x}_d, \mathbf{v}=\mathbf{0})$, knowing that the first-order system $\dot{\mathbf{x}}=\mathbf{v}_d$ guarantees safety and asymptotic stability of the target location $\mathbf{x}=\mathbf{x}_d$, where $\mathbf{v}_d$ is referred to as the first-order motion planner. The vectors $\mathbf{x} \in \mathbb{R}^n$, $\mathbf{v}\in\mathbb{R}^n$ and $\mathbf{u}\in\mathbb{R}^n$ denote the vehicle's position, velocity and control input, respectively.

We assume that the workspace $\mathcal{W}$ is a pathwise connected subset of $\mathbb{R}^n$, containing $n$-dimensional compact obstacles $\mathcal{O}_i$, where $i\in\{1, \ldots, m\}$ and $m\in\mathbb{N}$. Collectively, $\mathcal{O}_{\mathcal{W}} := \bigcup_{i\in\mathbb{I}}\mathcal{O}_i$ represents the unsafe region, where $\mathbb{I}:= \{0, \ldots, m\}$, with $\mathcal{O}_0 = (\mathcal{W}^{\circ})^c$ being the region outside of $\mathcal{W}$ with its boundary $\partial\mathcal{W}$.

The robot's body is contained within an $n$-dimensional sphere of radius $r > 0$, where $r$ is the sum of the robot's radius and a safety distance. For collision-free navigation, the robot's center $\mathbf{x}$ must always belong to the interior of the free space $\mathcal{X}_r$, where given $p > 0$, the set $\mathcal{X}_p$ is defined as
\begin{equation*}
    \mathcal{X}_p = \{\mathbf{x}\in\mathcal{W}\mid\mathcal{B}_{p}(\mathbf{x})\cap\mathcal{O}_{\mathcal{W}}^{\circ}=\emptyset\}. \label{equation_free_space}
\end{equation*}



To ensure safe navigation to any desired target location $ \mathbf{x}_d$, the free space $\mathcal{X}_r$ must be pathwise connected. Additionally, as mentioned next in Assumption \ref{assumption:conditions_on_unsafe_set}, we impose certain conditions on the unsafe region $\mathcal{X}_r^c$. 
These conditions are necessary to ensure that the gradient vector and the Hessian matrix of the distance function $d_{\mathbf{x}}(t)$, defined as
\begin{equation}\label{distance_function_d_x}
d_{\mathbf{x}}(t)= d(\mathbf{x}(t), \mathcal{O}_{\mathcal{W}}) - r,
\end{equation}
are well-defined when the robot is close to the obstacles, \textit{i.e.,} when $d_{\mathbf{x}}$ is small.

\begin{assumption}\label{assumption:conditions_on_unsafe_set}
The free space $\mathcal{X}_r$ is pathwise connected, and there exists $\delta_u>0$ such that the unsafe region $\mathcal{X}_r^c$ satisfies the following requirements:
\begin{enumerate} 
    \item \label{unsafe:condition1}The closest point to $\mathbf{x}$ on $\mathcal{O}_{\mathcal{W}}$ is unique for all $\mathbf{x}\in\mathcal{N}_{\delta_u}(\mathcal{X}_r^c)$ \textit{i.e.,} $\mathrm{card}(\mathcal{P}_{\mathcal{J}}(\mathbf{x}, \mathcal{O}_{\mathcal{W}})) = 1$ for all $\mathbf{x}\in\mathcal{N}_{\delta_u}(\mathcal{X}_r^c)$. 
    \item \label{unsafe:condition2}There exists $H>0$ such that $\|\mathbf{H}(\mathbf{x})\|_{\mathbf{F}}\leq H$ for all $\mathbf{x}\in\mathcal{N}_{\delta_u}(\mathcal{X}_r^c)$, where $\mathbf{H}(\mathbf{x}) = \nabla_{\mathbf{x}}^2d_{\mathbf{x}}$.
    \item The Hessian matrix \label{unsafe:condition3}$\mathbf{H}(\mathbf{x})$ is symmetric for all $\mathbf{x}\in\mathcal{N}_{\delta_u}(\mathcal{X}_r^c)$.
\end{enumerate}
\end{assumption}

\begin{remark}\label{remark:gradient_of_distance}
    According to Assumption \ref{assumption:conditions_on_unsafe_set}, there exists $\delta_u > 0$ such that $\mathrm{card}(\mathcal{P}_{\mathcal{J}}(\mathbf{x}, \mathcal{O}_{\mathcal{W}})) = 1$ for all $\mathbf{x}\in\mathcal{N}_{\delta_u}(\mathcal{X}_r^c)$.
    Therefore, as per \cite[Lemma 4.2]{rataj2019curvature}, $d_{\mathbf{x}}$ is continuously differentiable for all $\mathbf{x}\in\mathcal{N}_{\delta_u}(\mathcal{X}_r^c)$, and the gradient of $d_{\mathbf{x}}$ at $\mathbf{x}$ in $\mathcal{N}_{\delta_u}(\mathcal{X}_r^c)$ is given as
    \begin{equation}
        \label{gradient_of_distance}\nabla_{\mathbf{x}}d_{\mathbf{x}} = \eta(\mathbf{x}) = \frac{\mathbf{x} - \Pi(\mathbf{x}, \mathcal{O}_{\mathcal{W}})}{\|\mathbf{x} - \Pi(\mathbf{x}, \mathcal{O}_{\mathcal{W}})\|},
    \end{equation}
    where $\Pi(\mathbf{x}, \mathcal{O}_{\mathcal{W}})$ is the unique closest point to $\mathbf{x}$ in $\mathcal{O}_{\mathcal{W}}$, as defined in Section \ref{definition_distance_to_a_set}.

\end{remark}


We assume that $\mathbf{x}_d\in\mathcal{X}_r^{\circ}$ and the first-order motion planner $\mathbf{v}_d:\mathcal{X}_r^{\circ}\to\mathbb{R}^n$ satisfies the following assumption:

\begin{assumption}
\label{assumption:common}For the system $ \dot{\mathbf{x}}=\mathbf{v}_d(\mathbf{x})$, the following properties hold:
    \begin{enumerate}
    \item \label{assume:condition1} The $\omega$-limit set over $\mathcal{X}_r^{\circ}$ is given by $\mathcal{E}\cup\{\mathbf{x}_d\}$, where the set $\mathcal{E}$, which is defined as
    \begin{equation*}
        \mathcal{E} = \{\mathbf{x}\in\mathcal{X}_r^{\circ}\mid\mathbf{v}_d(\mathbf{x}) = \mathbf{0}, \mathbf{x}\ne\mathbf{x}_d\},
    \end{equation*}
    only contains isolated equilibrium points, and $\omega$-limit set is defined according to \cite[pg 227]{pontryagin1962}.
    \item \label{assume:condition2}The equilibrium point $\mathbf{x}_d$ is almost globally asymptotically stable over $\mathcal{X}_r^{\circ}$.
    \item \label{assume:condition3}For every $\mathbf{x}^*\in\mathcal{E}\cup\{\mathbf{x}_d\}$, the matrix $\nabla_{\mathbf{x}}\mathbf{v}_d(\mathbf{x}^*)$ is continuous and has eigenvalues with non-zero real parts.
    \item \label{assume:condition4}There exists $\mu>0$ and $\delta_d>0$ such that $\delta_d\leq\delta_u$ and the inequality $\mathbf{v}_d(\mathbf{x})^\top\eta(\mathbf{x})\geq \mu$ holds for all $\mathbf{x}\in\mathcal{N}_{\delta_d}(\mathcal{X}_r^c)$, where the existence of $\delta_u > 0$ is assumed in Assumption \ref{assumption:conditions_on_unsafe_set} and $\eta(\mathbf{x})$ is defined in \eqref{gradient_of_distance}.
    \item \label{assume:condition5}There exists $D>0$ such that $\|\mathbf{v}_d(\mathbf{x})\| \leq D$ for all $\mathbf{x}\in\mathcal{X}_r^{\circ}$.
    \end{enumerate}
\end{assumption}

\begin{remark}
Note that when $\mathbf{v}_d(\mathbf{x})$ is continuous and time-invariant, the existence of the set of undesired saddle equilibria $\mathcal{E}$, as implied from Conditions \ref{assume:condition1} and \ref{assume:condition2} of Assumption \ref{assumption:common}, is a direct consequence of the motion space topology, as proven in \cite{koditschek1990robot}.
In contrast, if $\mathbf{v}_d(\mathbf{x})$ ensures global asymptotic stability of $\mathbf{x}_d$ for the system $\dot{\mathbf{x}} = \mathbf{v}_d(\mathbf{x})$ over $\mathcal{X}_r^{\circ}$, as demonstrated using hybrid control techniques in \cite{berkane2021obstacle} and \cite{sawant2025hybrid}, then the set $\mathcal{E}$ becomes empty.
Since Condition \ref{assume:condition1} specifies that the set of equilibrium points, $\mathcal{E}\cup\{\mathbf{x}_d\}$, is exactly the $\omega$-limit set for $\dot{\mathbf{x}} = \mathbf{v}_d(\mathbf{x})$ over $\mathcal{X}_r^{\circ}$, 
it inherently excludes the presence of non-equilibrium limit sets within $\mathcal{X}_r^{\circ}$.


Condition \ref{assume:condition3} of Assumption \ref{assumption:common} ensures that there are no degenerate\footnote{For a dynamical system $\dot{\mathbf{x}} = \mathbf{v}_d(\mathbf{x})$ over $\mathcal{X}_r^{\circ}$, an equilibrium point $\mathbf{x}^*\in\mathcal{X}_r^{\circ}$ is a non-generate equilibrium point if $\det(\nabla_{\mathbf{x}}\mathbf{v}_d(\mathbf{x}^*))\ne 0$; it is degenerate otherwise.} equilibrium points in the set $\mathcal{E}\cup\{\mathbf{x}_d\}$.
This enables the analysis of the stability properties of each equilibrium point in this set by studying the eigenvalues of the Jacobian matrix evaluated at this point for the system $\dot{\mathbf{x}} = \mathbf{v}_d(\mathbf{x})$, as discussed later in Section \ref{section:feedback_control_design}.

When the robot is close to the obstacles \textit{i.e.,} when $d_{\mathbf{x}}$ is small, Condition \ref{assume:condition4} requires $\mathbf{v}_d(\mathbf{x})$ to point towards the interior of the set $\mathcal{X}_{r + d_{\mathbf{x}}}$, which is equivalent to having $\mathbf{v}_d(\mathbf{x})\in\mathcal{H}_{>}(\mathbf{0}, \eta(\mathbf{x}))$ when $d_{\mathbf{x}}\in(0, \delta_d)$.
Consequently, Condition \ref{assume:condition4} ensures that $\mathbf{v}_d(\mathbf{x})$ is not tangential to the set $\partial\mathcal{X}_{r + d_{\mathbf{x}}}$ at $\mathbf{x}$, when $\mathbf{x}\in\mathcal{N}_{\delta_d}(\mathcal{X}_r^c)$.
This guarantees that when the robot is in the $\delta_d$-neighbourhood of the obstacles, the first-order motion planner always drives it away from them, and
ensures forward invariance of $\mathcal{X}_r^{\circ}$ for the closed-loop system $\dot{\mathbf{x}} = \mathbf{v}_d(\mathbf{x})$.

\end{remark}

Two distinct problems are considered, characterized by additional properties of the first-order motion planner $\mathbf{v}_d$, alongside the general conditions specified in Assumption \ref{assumption:common}.

\begin{problem}\label{problem_with_NF}
     We assume that $\mathbf{v}_d(\mathbf{x})=-k_1\nabla_{\mathbf{x}}\varphi(\mathbf{x})$ satisfies Assumption \ref{assumption:common}, where $k_1>0$ and $\varphi:\mathcal{X}_r^{\circ}\to\mathbb{R}$ is a known continuously differentiable scalar function that is positive definite with respect to $\mathbf{x}_d$\footnote{If $\varphi:\mathcal{X}_r^{\circ}\to\mathbb{R}$ is positive definite function with respect to $\mathbf{x}_d$, then $\varphi(\mathbf{x}_d) = 0$ and $\varphi(\mathbf{x}) > 0$ for all $\mathbf{x}\in\mathcal{X}_r^{\circ}\setminus\{\mathbf{x}_d\}$.}. 
     Additionally, we require $\nabla_{\mathbf{x}}\varphi(\mathbf{x})$ to be uniformly continuous for all $\mathbf{x}\in\mathcal{X}_r^{\circ}$.
\end{problem}

\begin{problem}\label{problem_with_VF}
We assume knowledge of a continuously differentiable first-order motion planner $\mathbf{v}_d:\mathcal{X}_r^{\circ}\to\mathbb{R}^n$ that satisfies Assumption \ref{assumption:common}.
Additionally, we require $\nabla_{\mathbf{x}}\mathbf{v}_d(\mathbf{x})$ to be bounded for all $\mathbf{x}\in\mathcal{X}_r^{\circ}$.
\end{problem}

\begin{remark}
Problem \ref{problem_with_NF} requires knowledge of a scalar function $\varphi(\mathbf{x})$ such that the first-order motion planner of the form $\mathbf{v}_d(\mathbf{x}) = -k_1\nabla_{\mathbf{x}}\varphi(\mathbf{x})$ satisfies Assumption \ref{assumption:common}, along with additional conditions discussed earlier. In a spherical workspace, the NFs proposed in \cite{koditschek1990robot} and \cite{verginis2021adaptive} satisfy these requirements. For workspaces containing non-spherical obstacles that meet specific curvature conditions, the NFs described in \cite{filippidis2013navigation} and \cite{paternain2017navigation} are suitable.

In contrast, Problem \ref{problem_with_VF} does not assume knowledge of any scalar function $\varphi(\mathbf{x})$ and requires the first-order motion planner to be continuously differentiable, with additional conditions. As a result, all aforementioned first-order planners that satisfy the requirements of Problem \ref{problem_with_NF} are also applicable to Problem \ref{problem_with_VF}. Furthermore, the continuously differentiable first-order motion planner proposed in \cite{kumar2022navigation}, which do not meet the requirements of Problem \ref{problem_with_NF} due to the absence of a scalar function $\varphi(\mathbf{x})$, remain valid choices for Problem \ref{problem_with_VF}.
\end{remark}

Given a first-order motion planner $\mathbf{v}_d:\mathcal{X}_r^{\circ}\to\mathbb{R}^n$ that satisfies Assumption \ref{assumption:common} and additional requirements as mentioned earlier, the objective is to design a feedback control $\mathbf{u}$ in \eqref{equation:second_order_system} such that for the resulting closed-loop system the following statements hold:
\begin{enumerate}
    \item The set $\mathcal{X}_r^{\circ}\times\mathbb{R}^n$ is forward invariant.
    \item The equilibrium point $(\mathbf{x}_d, \mathbf{0})$ is almost globally asymptotically stable over $\mathcal{X}_r^{\circ}\times\mathbb{R}^n$.
\end{enumerate}

\section{Feedback control design}\label{section:feedback_control_design}
If the first-order planner $\mathbf{u} = \mathbf{v}_d(\mathbf{x})$ is applied to the second-order system \eqref{equation:second_order_system} without incorporating damping, the $\mathbf{x}$-trajectory of the resulting closed-loop system may exhibit an overshoot. This overshoot may lead the system into unsafe regions, potentially resulting in collisions with obstacles.  
To avoid such overshoots, a damping vector of the form $-k \mathbf{v}$, with a sufficiently high gain $k > 0$, can be introduced. 
However, excessively high damping significantly reduces the robot's velocity, leading to slow convergence to the desired target state.  

To address this trade-off, we propose a dynamic damping gain that adapts based on the robot's proximity to obstacles. 
Specifically, the damping gain remains low when the robot is far from obstacles, allowing for fast motion, and increases as the robot approaches obstacle boundaries.
This mechanism ensures safety by reducing the robot's velocity near obstacles while maintaining efficient progress toward the target in obstacle-free regions.

\subsection{Feedback control law for Problem \ref{problem_with_NF}}
The proposed dynamic damping feedback (DDF) control  $\mathbf{u} = \mathbf{u}_d(\mathbf{x}, \mathbf{v})$ is given by
\begin{equation}\label{controller:NF}
    \mathbf{u}_d(\mathbf{x}, \mathbf{v}) = -k_1\nabla_{\mathbf{x}}\varphi(\mathbf{x}) - k_d\beta(d_{\mathbf{x}})\mathbf{v},
\end{equation}
where $k_1>0$, $k_d > 0$ and $d_{\mathbf{x}} = d(\mathbf{x}, \mathcal{O}_{\mathcal{W}}) - r$.
The known scalar function $\varphi(\mathbf{x})$ satisfies the properties mentioned in Problem \ref{problem_with_NF}.
Given $p\in\mathbb{R}_{>0}$, the scalar function $\beta:\mathbb{R}_{>0}\to[1, \infty)$ is defined as
\begin{equation}\label{beta_function_definition}
    \beta(p) = \begin{cases}
        1, & p \geq \epsilon_2,\\
        \phi(p), & \epsilon_1\leq p\leq \epsilon_2, \\
        p^{-1}, & 0<p \leq \epsilon_1,
    \end{cases}
\end{equation}
where $\epsilon_1\in(0, 1)$, $\epsilon_2 > \epsilon_1$ and $\phi(p):[\epsilon_1, \epsilon_2]\to[1, \epsilon_1^{-1}]$ is a continuous, monotonically decreasing function\footnote{Since $\epsilon_1\in(0, 1)$, it follows that $1/\epsilon_1 > 1$. Therefore, simplest expression for the monotonically decreasing function $\phi(p)$ over $[\epsilon_1, \epsilon_2]$ would be the equation of the line segment joining the points $(\epsilon_1, 1/\epsilon_1)$ and $(\epsilon_2, 1)$, which is given by
$\phi(p) = \frac{\epsilon_2-\epsilon_1^2 + (\epsilon_1 - 1)p}{\epsilon_1(\epsilon_2 - \epsilon_1)}$.} such that $\phi(\epsilon_1) ={\epsilon_1}^{-1}$ and  $\phi(\epsilon_2) = 1$. 
The block diagram representation of the DDF controller is provided in Fig. \ref{block_diagram:DDF}.


According to \eqref{controller:NF} and \eqref{beta_function_definition}, the damping gain remains constant when the robot is at least $\epsilon_2$ units away from obstacles. However, when the robot is within $\epsilon_2$ units of the obstacles and moves toward them, the damping gain increases.
This increase in damping reduces the robot’s velocity, eventually causing it to move in the direction of $ -k_1 \nabla_{\mathbf{x}} \varphi(\mathbf{x}) $.
According to Condition \ref{assume:condition4} of Assumption \ref{assumption:common}, when $d_{\mathbf{x}}$ is small, the vector $-k_1\nabla_{\mathbf{x}}\varphi(\mathbf{x})$ always belongs to the open half-space $\mathcal{H}_{>}(\mathbf{0}, \eta(\mathbf{x}))$ and points away from nearby obstacles, ensuring that the robot eventually moves away from them. 
These observations are summarized in the following lemma:

\begin{lemma}\label{lemma:NF}
Consider the closed-loop system \eqref{equation:second_order_system}-\eqref{controller:NF}, under Assumptions \ref{assumption:conditions_on_unsafe_set} and  \ref{assumption:common}. If $d_{\mathbf{x}}(0) > 0$, then the following statements hold:
\begin{enumerate}
    \item $d_{\mathbf{x}}(t) > 0$ for all $t\geq 0$\label{lemma:NF_claim1}.
    \item $\Lim_{t \rightarrow \infty} d_{\mathbf{x}}(t) \not=0$.\label{lemma:NF_claim2}
    \item $\mathbf{u}_d(\mathbf{x}(t), \mathbf{v}(t))$ is bounded for all $t\geq 0$.\label{lemma:NF_claim3}
\end{enumerate}
\end{lemma}
\proof{
    See Appendix \ref{proof:lemma:NF}.
    }

\begin{figure}
    \centering
    \includegraphics[width=1\linewidth]{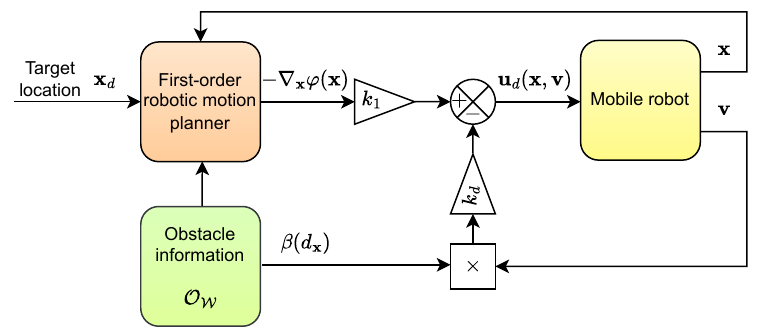}
    \caption{Block diagram of the dynamic damping feedback controller \eqref{controller:NF}.}
    \label{block_diagram:DDF}
\end{figure}

Now, we show that the DDF input \eqref{controller:NF} ensures almost global asymptotic stability of the target state $(\mathbf{x}_d, \mathbf{0})$ for the closed-loop system \eqref{equation:second_order_system}-\eqref{controller:NF} over $\mathcal{X}_r^{\circ}\times\mathbb{R}^n$.

\begin{theorem}\label{theorem:NF}
    For the closed-loop system \eqref{equation:second_order_system}-\eqref{controller:NF}, under Assumptions \ref{assumption:conditions_on_unsafe_set} and  \ref{assumption:common}, the following statements hold:
    \begin{enumerate}
    \item The set $\mathcal{X}_r^{\circ}\times\mathbb{R}^n$ is forward invariant\label{theorem:NF_claim1}.
    \item \label{theorem:NF_claim2} The set of equilibrium points is given by $\mathcal{S}\cup(\mathbf{x}_d, \mathbf{0})$, where the set $\mathcal{S}$ is defined as \begin{equation}\label{definition:undesired_target_set}
            \mathcal{S}:=\{(\mathbf{x}, \mathbf{0})\in\mathcal{X}_r^{\circ}\times\{\mathbf{0}\}\mid\mathbf{x}\in\mathcal{E}\}.
        \end{equation}

    \item If $k_d > \frac{|g_{\max}|}{\sqrt{|r_{\max}|}}$, then $(\mathbf{x}_d, \mathbf{0})$ is almost globally asymptotically stable over $\mathcal{X}_r^{\circ}\times\mathbb{R}^n$, where
    \begin{equation}\label{lower_bounde_on_kd}
        g_{\max} = \underset{i}{\max}\;\mathbf{Im}(g_i), \quad r_{\max} = \underset{i}{\max}\;\mathbf{Re}(g_i),
    \end{equation}
    and $g_i$, with $i\in\{1, \ldots, n\}$, are the eigenvalues of $\nabla_{\mathbf{x}}\mathbf{v}_d(\mathbf{x}_d)$.\label{theorem:NF_claim3}
        
    \end{enumerate}
\end{theorem}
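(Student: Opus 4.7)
The plan is to dispatch Parts \ref{theorem:NF_claim1} and \ref{theorem:NF_claim2} of the theorem quickly and then do the real work for Part \ref{theorem:NF_claim3} via a combination of a total-energy Lyapunov argument, LaSalle's invariance principle, and linearization at each equilibrium of the closed-loop system. Part \ref{theorem:NF_claim1} follows directly from claim \ref{lemma:NF_claim1} of Lemma \ref{lemma:NF}: if $(\mathbf{x}(0),\mathbf{v}(0))\in\mathcal{X}_r^{\circ}\times\mathbb{R}^n$, then $d_{\mathbf{x}}(t)>0$, so $\mathbf{x}(t)\in\mathcal{X}_r^{\circ}$, while $\mathbf{v}$ stays in $\mathbb{R}^n$ trivially. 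For Part \ref{theorem:NF_claim2}, setting $\dot{\mathbf{x}}=\mathbf{v}=\mathbf{0}$ and $\dot{\mathbf{v}}=\mathbf{u}_p=\mathbf{0}$ in \eqref{equation:second_order_system}--\eqref{controller:NF} forces $\mathbf{v}=\mathbf{0}$ and $k_1\nabla_{\mathbf{x}}\varphi(\mathbf{x})=-\mathbf{v}_d(\mathbf{x})=\mathbf{0}$, so $\mathbf{x}\in\mathcal{E}\cup\{\mathbf{x}_d\}$, which is exactly $\mathcal{S}\cup\{(\mathbf{x}_d,\mathbf{0})\}$.

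For Part \ref{theorem:NF_claim3}, I would use the total energy $V(\mathbf{x},\mathbf{v})=k_1\varphi(\mathbf{x})+\tfrac{1}{2}\|\mathbf{v}\|^2$ as Lyapunov candidate. A direct computation using \eqref{controller:NF} gives $\dot V = -k_d\beta(d_{\mathbf{x}})\|\mathbf{v}\|^2\le 0$, so $V$ is non-increasing along trajectories. Boundedness of $\mathbf{v}$ then follows, and together with Part \ref{theorem:NF_claim1} and the forward invariance of $\mathcal{X}_r^{\circ}$, allows LaSalle's invariance principle to be applied on the $\omega$-limit set of each trajectory. The largest invariant set inside $\{\dot V=0\}=\{\mathbf{v}=\mathbf{0}\}$ forces $\dot{\mathbf{v}}=\mathbf{0}$, hence $\nabla_{\mathbf{x}}\varphi(\mathbf{x})=\mathbf{0}$, so every trajectory converges to $\mathcal{S}\cup\{(\mathbf{x}_d,\mathbf{0})\}$.

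The main obstacle is the eigenvalue analysis at the equilibria. Linearizing \eqref{equation:second_order_system}--\eqref{controller:NF} around $(\mathbf{x}^*,\mathbf{0})$, with $A=\nabla_{\mathbf{x}}\mathbf{v}_d(\mathbf{x}^*)$ and $\gamma=k_d\beta(d_{\mathbf{x}^*})>0$, yields the block Jacobian
\begin{equation*}
J=\begin{bmatrix}\mathbf{0} & \mathbf{I}_n\\ A & -\gamma\mathbf{I}_n\end{bmatrix}.
\end{equation*}
A Schur-complement computation shows that $\lambda$ is an eigenvalue of $J$ iff $\lambda^2+\gamma\lambda-g_i=0$ for some eigenvalue $g_i$ of $A$. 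At a saddle $\mathbf{x}^*\in\mathcal{E}$, Condition \ref{assume:condition3} of Assumption \ref{assumption:common} combined with almost global asymptotic stability of $\mathbf{x}_d$ for the first-order system (Condition \ref{assume:condition2}) forces at least one $g_i$ to have $\mathbf{Re}(g_i)>0$; writing the principal square root of $\gamma^2+4g_i$ in rectangular form shows immediately that the corresponding $\lambda$ has positive real part regardless of $k_d$, so $(\mathbf{x}^*,\mathbf{0})$ is a hyperbolic saddle. At $(\mathbf{x}_d,\mathbf{0})$, every $g_i$ satisfies $\mathbf{Re}(g_i)<0$; writing $g_i=\mathbf{Re}(g_i)+j\,\mathbf{Im}(g_i)$ and requiring $\mathbf{Re}(\sqrt{\gamma^2+4g_i})<\gamma$ reduces, after squaring, to $\gamma^2|\mathbf{Re}(g_i)|>\mathbf{Im}(g_i)^2$, i.e. $\gamma>|\mathbf{Im}(g_i)|/\sqrt{|\mathbf{Re}(g_i)|}$. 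Since $\beta(d_{\mathbf{x}_d})\ge 1$, the bound $k_d>|g_{\max}|/\sqrt{|r_{\max}|}$ in \eqref{lower_bounde_on_kd} uniformly implies this inequality for every $i$, making $(\mathbf{x}_d,\mathbf{0})$ locally exponentially stable.

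Putting these pieces together, every trajectory in $\mathcal{X}_r^{\circ}\times\mathbb{R}^n$ converges to $\mathcal{S}\cup\{(\mathbf{x}_d,\mathbf{0})\}$, every point of $\mathcal{S}$ is a hyperbolic saddle whose stable manifold has positive codimension in $\mathbb{R}^{2n}$, and hence has zero Lebesgue measure. Combining the Stable Manifold Theorem with local asymptotic stability of $(\mathbf{x}_d,\mathbf{0})$ yields almost global asymptotic stability of the target state over $\mathcal{X}_r^{\circ}\times\mathbb{R}^n$. I expect the delicate step to be the complex square-root estimate that produces the sharp threshold on $k_d$, together with the justification that LaSalle applies on the (a priori unbounded but trajectory-bounded) set $\mathcal{X}_r^{\circ}\times\mathbb{R}^n$ using monotonicity of $V$ and the safety guarantee $\Lim_{t\to\infty}d_{\mathbf{x}}(t)\neq 0$ from Lemma \ref{lemma:NF}.
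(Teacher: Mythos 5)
Your Parts \ref{theorem:NF_claim1} and \ref{theorem:NF_claim2} and, more importantly, your linearization step coincide with the paper's argument: the paper also reduces the spectrum of the block Jacobian at $(\mathbf{x}^*,\mathbf{0})$ to the quadratic $\lambda^2+k_d\beta(d_{\mathbf{x}^*})\lambda-\theta=0$ with $\theta$ an eigenvalue of $\nabla_{\mathbf{x}}\mathbf{v}_d(\mathbf{x}^*)=-k_1\nabla^2_{\mathbf{x}}\varphi(\mathbf{x}^*)$, and your square-root computation (instability at points of $\mathcal{E}$ for any $k_d$, and the reduction of $\mathbf{Re}(\sqrt{\gamma^2+4g_i})<\gamma$ to $\gamma>|\mathbf{Im}(g_i)|/\sqrt{|\mathbf{Re}(g_i)|}$, hence the threshold \eqref{lower_bounde_on_kd} with $\beta\geq1$) is exactly the sharp-gain argument the paper relies on, as is the concluding zero-measure/stable-manifold step.

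The genuine gap is in your attractivity argument via LaSalle. LaSalle's invariance principle needs the positive orbit to be precompact, and here only $\mathbf{v}(t)$ is bounded through $V$: the workspace $\mathcal{W}$ may be unbounded (the paper's first simulation uses an unbounded planar workspace) and $\varphi$ is only positive definite, not radially unbounded (navigation functions take values in $[0,1]$), so the sublevel sets of $V=k_1\varphi+\tfrac12\|\mathbf{v}\|^2$ are not compact and boundedness of $\mathbf{x}(t)$ is not available; an $L^2$ bound on $\mathbf{v}$ does not prevent $\mathbf{x}(t)$ from drifting off to infinity. Your "trajectory-bounded" qualifier is therefore unproven, and the safety guarantee of Lemma \ref{lemma:NF} only keeps the orbit away from $\partial\mathcal{X}_r$, not from escaping to infinity. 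The paper's proof is built to avoid precisely this: from $\dot V_p\leq -k_d\|\mathbf{v}\|^2$ it gets $\|\mathbf{v}\|^2\in L^1$, uses boundedness of $\mathbf{u}_p$ (Claim \ref{lemma:NF_claim3} of Lemma \ref{lemma:NF}) for uniform continuity and Barbalat's lemma to conclude $\lim_{t\to\infty}\mathbf{v}(t)=\mathbf{0}$, and then invokes the extension of Barbalat's lemma (Lemma \ref{extension_of_Barbalat_lemma}) together with the uniform continuity of $\nabla_{\mathbf{x}}\varphi$ assumed in Problem \ref{problem_with_NF} — an assumption your sketch never uses, which is a telltale sign — to get $\lim_{t\to\infty}\dot{\mathbf{v}}(t)=\mathbf{0}$ and hence convergence to $\mathcal{S}\cup(\mathbf{x}_d,\mathbf{0})$ without any compactness of the position trajectory. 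To repair your proposal, either restrict to bounded workspaces and prove precompactness (and that the $\omega$-limit set stays in $\mathcal{X}_r^{\circ}\times\mathbb{R}^n$), or replace the LaSalle step with the Barbalat-type argument above.
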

\proof{
    See Appendix \ref{proof:theorem:NF}.}

\begin{remark}\label{remark:comment_of_gmax}
    In most research works, such as \cite{koditschek1990robot}, \cite{paternain2017navigation}, and \cite{verginis2021adaptive}, the first-order motion planners $\mathbf{v}_d$ resemble $-\mathbf{P}(\mathbf{x} - \mathbf{x}_d)$ when $\mathbf{x}$ belongs to the neighborhood of $\mathbf{x}_d$, where $\mathbf{P}$ is a positive definite matrix.  
    Consequently, for such first-order motion planners, $g_{\max}$, as evaluated in \eqref{lower_bounde_on_kd}, is zero.  
    Therefore, when implementing them in \eqref{controller:NF}, one can set $k_d > 0$ to ensure almost global asymptotic stability of $(\mathbf{x}_d, \mathbf{0})$ for the proposed closed-loop system \eqref{equation:second_order_system}–\eqref{controller:NF} over $\mathcal{X}_r^{\circ} \times \mathbb{R}^n$.
\end{remark}

The DDF control input $\mathbf{u}_d(\mathbf{x}, \mathbf{v})$, as defined in \eqref{controller:NF}, requires knowledge of a scalar function $\varphi(\mathbf{x})$ such that the first-order motion planner of the form $-k_1\nabla_{\mathbf{x}}\varphi(\mathbf{x})$ satisfies Assumption \ref{assumption:common}, along with other conditions stated in Problem \ref{problem_with_NF}. 
However, ensuring the existence and knowledge of such scalar functions whose negative gradient with respect to $\mathbf{x}$ would align with a given first-order motion planner $\mathbf{v}_d(\mathbf{x})$ is a challenging task. 
Consequently, in the next section, we propose an alternative feedback control input design for Problem \ref{problem_with_VF}, where the knowledge of $\varphi(\mathbf{x})$ is not required.

\subsection{Feedback control law for Problem \ref{problem_with_VF}}
The proposed velocity tracking feedback (VTF) control $\mathbf{u} = \mathbf{u}_v(\mathbf{x}, \mathbf{v})$ is given by
\begin{equation}\label{controller:VF}
    \mathbf{u}_v(\mathbf{x}, \mathbf{v}) =  -  k_d\beta(d_{\mathbf{x}})(\mathbf{v} - \mathbf{v}_d(\mathbf{x})) + \nabla_{\mathbf{x}}\mathbf{v}_d(\mathbf{x})^\top\mathbf{v},
\end{equation}
where $k_d > 0$.
The first-order motion planner $\mathbf{v}_d:\mathcal{X}_r^{\circ}\to\mathbb{R}^n$ satisfies Assumption \ref{assumption:common} and additional requirements mentioned in Problem \ref{problem_with_VF}.
The scalar function $\beta(\cdot)$ is defined in \eqref{beta_function_definition}.
The block diagram representation of the VTF controller is provided in Fig. \ref{block_diagram:VTF}.

According to \eqref{beta_function_definition} and \eqref{controller:VF}, similar to \eqref{controller:NF}, the proposed feedback control input in \eqref{controller:VF} is undefined when $\mathbf{x} \in \overline{\mathcal{X}_r^c}$. However, if $\mathbf{x}$ is initialized in the interior of the free space $ \mathcal{X}_r$ \textit{i.e.,} if $\mathbf{x}(0)\in\mathcal{X}_r^{\circ}$, then $\mathbf{x}(t)\in\mathcal{X}_r^{\circ}$ for all $t\geq 0$ and it does not approach the boundary of the unsafe region $\partial \mathcal{X}_r^c$ as $t\to\infty$, as stated in the next lemma.

\begin{lemma}\label{lemma:VF}
Consider the closed-loop system \eqref{equation:second_order_system}-\eqref{controller:VF}, under Assumptions \ref{assumption:conditions_on_unsafe_set} and \ref{assumption:common}. If $d_{\mathbf{x}}(0) > 0$, then the following statements hold:
\begin{enumerate}
    \item $d_{\mathbf{x}}(t) > 0$ for all $t\geq 0$\label{lemma:VF_claim1}.
    \item $\Lim_{t \rightarrow \infty} d_{\mathbf{x}}(t) \not=0$\label{lemma:VF_claim2}.
     \item $\mathbf{u}_v(\mathbf{x}(t), \mathbf{v}(t))$ is bounded for all $t\geq 0$.\label{lemma:VF_claim3}
\end{enumerate}
\end{lemma}
\proof{
    See Appendix \ref{proof:lemma:VF}.
    }

\begin{figure}
    \centering
    \includegraphics[width=1\linewidth]{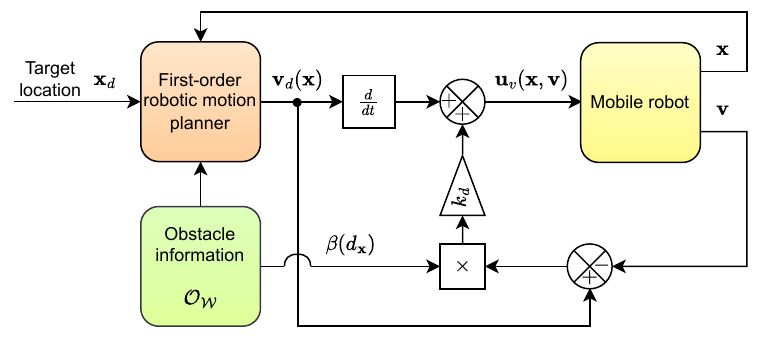}
    \caption{Block diagram of the velocity tracking feedback controller \eqref{controller:VF}.}
    \label{block_diagram:VTF}
\end{figure}


The VTF control input \eqref{controller:VF} ensures monotonic decrease in the magnitude of the difference between the robot's velocity $\mathbf{v}(t)$ and the first-order motion planner $\mathbf{v}_d(\mathbf{x}(t))$ for all $t\geq 0$.
This enables us to establish the almost global asymptotic stability of the target state $(\mathbf{x}_d, \mathbf{0})$ for the closed-loop system \eqref{equation:second_order_system}-\eqref{controller:VF} over $\mathcal{X}_r^{\circ} \times \mathbb{R}^n$, as stated in the next theorem.


\begin{theorem}\label{theorem:VF}
    For the closed-loop system \eqref{equation:second_order_system}-\eqref{controller:VF}, under Assumptions \ref{assumption:conditions_on_unsafe_set} and \ref{assumption:common},  the following statements hold:
    \begin{enumerate}
        \item The set $\mathcal{X}_r^{\circ}\times\mathbb{R}^n$ is forward invariant\label{theorem:VF_claim1}.
        \item The norm $\|\mathbf{v}(t) - \mathbf{v}_d(\mathbf{x}(t))\|$ is monotonically decreasing for all $t\geq 0$.\label{theorem:VF_decrease}
        \item \label{theorem:VF_claim2}The set of equilibrium points is given by $\mathcal{S}\cup(\mathbf{x}_d, \mathbf{0})$, where the set $\mathcal{S}$ is defined in \eqref{definition:undesired_target_set}. 

        \item The equilibrium point $(\mathbf{x}_d, \mathbf{0})$ is almost globally asymptotically stable over $\mathcal{X}_r^{\circ}\times\mathbb{R}^n$\label{theorem:VF_claim3}.
    \end{enumerate}
\end{theorem}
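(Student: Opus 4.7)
The approach closely mirrors the proof of Theorem \ref{theorem:NF} but exploits the exact cancellation afforded by the feedforward term $\nabla_{\mathbf{x}}\mathbf{v}_d(\mathbf{x})^\top\mathbf{v}$ in \eqref{controller:VF}. Claim \ref{theorem:VF_claim1} is immediate from Lemma \ref{lemma:VF}. For Claim \ref{theorem:VF_decrease}, I would introduce the velocity-tracking error $\tilde{\mathbf{v}}:=\mathbf{v}-\mathbf{v}_d(\mathbf{x})$ and compute
\begin{equation*}
\dot{\tilde{\mathbf{v}}}=\mathbf{u}_d(\mathbf{x},\mathbf{v})-\nabla_{\mathbf{x}}\mathbf{v}_d(\mathbf{x})^\top\mathbf{v}=-k_d\beta(d_{\mathbf{x}})\tilde{\mathbf{v}},
\end{equation*}
so that $\beta(p)\ge 1$ for all $p>0$ gives $\|\tilde{\mathbf{v}}(t)\|\le\|\tilde{\mathbf{v}}(0)\|e^{-k_d t}$, yielding both the monotonic decrease asserted in Claim \ref{theorem:VF_decrease} and exponential convergence. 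Claim \ref{theorem:VF_claim2} then follows since at any equilibrium $\mathbf{v}=\mathbf{0}$ forces $\mathbf{u}_d(\mathbf{x},\mathbf{0})=k_d\beta(d_{\mathbf{x}})\mathbf{v}_d(\mathbf{x})=\mathbf{0}$, which via $\beta\ge 1$ collapses to $\mathbf{v}_d(\mathbf{x})=\mathbf{0}$ and hence to the equilibrium set $\mathcal{S}\cup\{(\mathbf{x}_d,\mathbf{0})\}$.

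For Claim \ref{theorem:VF_claim3}, my plan is a three-step cascade argument. First, I would combine Assumption \ref{assumption:common}.\ref{assume:condition5} with the exponential decay of $\tilde{\mathbf{v}}$ to bound $\|\mathbf{v}(t)\|\le D+\|\tilde{\mathbf{v}}(0)\|$, and invoke Lemma \ref{lemma:VF} to keep the trajectory in $\mathcal{X}_r^{\circ}\times\mathbb{R}^n$ so that the $(\mathbf{x},\tilde{\mathbf{v}})$-trajectory has a non-empty compact $\omega$-limit set disjoint from $\partial\mathcal{X}_r^c\times\mathbb{R}^n$. Because $\tilde{\mathbf{v}}(t)\to\mathbf{0}$, every such limit point lies on the invariant manifold $\{\tilde{\mathbf{v}}=\mathbf{0}\}$, on which the closed-loop system reduces to $\dot{\mathbf{x}}=\mathbf{v}_d(\mathbf{x})$. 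Condition \ref{assume:condition1} of Assumption \ref{assumption:common} then forces the $\omega$-limit set to be contained in $\mathcal{S}\cup\{(\mathbf{x}_d,\mathbf{0})\}$; since these equilibria are isolated and the $\omega$-limit set of a bounded trajectory of a continuous flow is connected, every trajectory converges to exactly one of them.

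Second, I would classify each equilibrium through the linearization in $(\mathbf{x},\tilde{\mathbf{v}})$ coordinates. The Jacobian at $(\mathbf{x}^*,\mathbf{0})$ is the block upper-triangular matrix
\begin{equation*}
\begin{bmatrix}
(\nabla_{\mathbf{x}}\mathbf{v}_d(\mathbf{x}^*))^\top & \mathbf{I}_n \\
\mathbf{0}_n & -k_d\beta(d_{\mathbf{x}^*})\mathbf{I}_n
\end{bmatrix},
\end{equation*}
whose spectrum consists of the eigenvalues of $\nabla_{\mathbf{x}}\mathbf{v}_d(\mathbf{x}^*)$ together with $-k_d\beta(d_{\mathbf{x}^*})$ repeated $n$ times. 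At $\mathbf{x}_d$, Conditions \ref{assume:condition2} and \ref{assume:condition3} of Assumption \ref{assumption:common} force all real parts of the eigenvalues of $\nabla_{\mathbf{x}}\mathbf{v}_d(\mathbf{x}_d)$ to be strictly negative, yielding local exponential stability of $(\mathbf{x}_d,\mathbf{0})$. At any $\mathbf{x}^*\in\mathcal{E}$, asymptotic stability of $\mathbf{x}^*$ under $\dot{\mathbf{x}}=\mathbf{v}_d(\mathbf{x})$ would produce a basin of positive Lebesgue measure disjoint from that of $\mathbf{x}_d$, contradicting its almost global asymptotic stability; together with Condition \ref{assume:condition3} of Assumption \ref{assumption:common}, at least one eigenvalue of $\nabla_{\mathbf{x}}\mathbf{v}_d(\mathbf{x}^*)$ must have strictly positive real part, so $(\mathbf{x}^*,\mathbf{0})$ is a hyperbolic saddle. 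Third, the stable manifold theorem provides at each such saddle a local stable manifold of dimension strictly less than $2n$, whose global stable set has Lebesgue measure zero; since $\mathcal{E}$ consists of isolated points and is therefore at most countable, the union of these null sets is still null, so every initial condition outside this union is attracted to $(\mathbf{x}_d,\mathbf{0})$.

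The main technical obstacle is the first step, where the $\omega$-limit analysis must confine the limit points to $\{\tilde{\mathbf{v}}=\mathbf{0}\}$ rather than drifting toward $\partial\mathcal{X}_r^c$, at which $\mathbf{v}_d$ and the feedforward term $\nabla_{\mathbf{x}}\mathbf{v}_d(\mathbf{x})^\top\mathbf{v}$ in \eqref{controller:VF} could lose the boundedness guaranteed by Problem \ref{problem_with_VF}. Lemma \ref{lemma:VF}.\ref{lemma:VF_claim2} is essential here, since it rules out accumulation on the unsafe boundary, and the uniform bound on $\nabla_{\mathbf{x}}\mathbf{v}_d$ required in Problem \ref{problem_with_VF} then prevents the feedforward term from blowing up along the trajectory, closing the argument.
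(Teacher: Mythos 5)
Your proposal is correct and follows the same skeleton as the paper's proof: forward invariance from Lemma \ref{lemma:VF}, exponential decay of the error $\mathbf{z}=\mathbf{v}-\mathbf{v}_d(\mathbf{x})$ (the paper's Lemma \ref{fact_for_eventual_positive_distance}), the same equilibrium-set computation, and almost global asymptotic stability obtained from attractivity of $\mathcal{S}\cup(\mathbf{x}_d,\mathbf{0})$ combined with a linearization and stable-manifold (measure-zero) argument. The one genuine, though modest, difference is in the linearization step: the paper stays in $(\mathbf{x},\mathbf{v})$ coordinates and invokes the block-determinant identity of Fact \ref{fact_block_matrix_determinant} to factor $\det(\mathbf{J}_{d\lambda})$, whereas you pass to the error coordinates $(\mathbf{x},\tilde{\mathbf{v}})$, in which $\dot{\tilde{\mathbf{v}}}=-k_d\beta(d_{\mathbf{x}})\tilde{\mathbf{v}}$ and the Jacobian at an equilibrium is block upper-triangular, so the spectrum (the eigenvalues of $\nabla_{\mathbf{x}}\mathbf{v}_d(\mathbf{x}^*)$ together with $-k_d\beta(d_{\mathbf{x}^*})$ with multiplicity $n$) is read off directly; since $(\mathbf{x},\mathbf{v})\mapsto(\mathbf{x},\mathbf{v}-\mathbf{v}_d(\mathbf{x}))$ is a $C^1$ diffeomorphism, the two linearizations are similar and you recover exactly the paper's eigenvalue structure without the determinant computation. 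For the attractivity part you phrase the argument via the $\omega$-limit set restricted to $\{\tilde{\mathbf{v}}=\mathbf{0}\}$, while the paper argues directly that $\mathbf{v}-\mathbf{v}_d\to\mathbf{0}$ together with Condition \ref{assume:condition1} of Assumption \ref{assumption:common} forces $\mathbf{x}(t)\to\mathcal{E}\cup\{\mathbf{x}_d\}$; both formulations rest on the same ingredients (Lemma \ref{lemma:VF} to keep the trajectory away from $\partial\mathcal{X}_r^c$ and boundedness of $\mathbf{v}$), and both leave the boundedness of $\mathbf{x}(t)$ in an unbounded workspace implicit, so your version is no less rigorous than the paper's at that point.
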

\proof{
    See Appendix \ref{proof:theorem_first_order_convergence}.
}

\section{Simulation results}
\label{section:simulations}
For the first two simulations, we consider the NF-based first-order motion planner, as described in \cite{paternain2017navigation}.
The workspace $\mathcal{W}$ is a sphere centered at $\mathbf{c}_0\in\mathbb{R}^n$ with radius $r_0>0$ and contains spherical and ellipsoidal obstacles $\mathcal{O}_i$, $i\in\{1, \ldots, m\}$.
Furthermore, we assume that the distance between any two obstacles, as well as the distance between any obstacle and the workspace boundary, is greater than $2r$.
In other words, we assume that for all $i, j\in\mathbb{I}, i\ne j$,
\begin{equation*}
    d(\mathcal{O}_i, \mathcal{O}_j)   > 2r,
\end{equation*}
where $d(\mathcal{O}_i, \mathcal{O}_j) := \underset{\mathbf{a}\in\mathcal{O}_i, \mathbf{b}\in\mathcal{O}_j}{\min}\|\mathbf{a} - \mathbf{b}\|$.

We define an NF $\varphi:\mathcal{X}_r\to[0, 1]$ of the following form:
\begin{equation}\label{simulation-NF-definition}
    \varphi(\mathbf{x}) = \frac{f(\mathbf{x})}{(f(\mathbf{x})^{\kappa} + h(\mathbf{x}))^{1/\kappa}},
\end{equation}
where $\kappa > 0$.
The scalar convex function $f(\mathbf{x})$ with global minimum at $\mathbf{x}_d$ is defined as \begin{equation}\label{definition:attractive_potential}f(\mathbf{x}) = \delta_1\|\mathbf{x} - \mathbf{x}_d\|^2,
\end{equation}
where $\delta_1 > 0$. 
The obstacle proximity function $h(\mathbf{x})$ is given by 
\begin{equation}\label{obstacle_proximity_function}
    h(\mathbf{x}) = \prod_{i\in\mathbb{I}} h_i(\mathbf{x}),
\end{equation} 
where, for each $i \in \mathbb{I}$, the twice continuously differentiable scalar function $h_i(\mathbf{x})$ is associated with obstacle $\mathcal{O}_i$ and satisfies
\begin{equation*}
    \mathcal{O}_i = \{\mathbf{x}\in\mathbb{R}^n\mid h_i(\mathbf{x}) \leq 0\}.
\end{equation*}
It is shown in \cite[Theorem 3]{paternain2017navigation} that if the robot can pass between any pair of obstacles and the obstacles satisfy certain curvature conditions, then there exists $\kappa_{\min} > 0$ such that for all $\kappa \geq \kappa_{\min}$, the first-order motion planner $\mathbf{v}_d(\mathbf{x}) = -k_1\nabla_{\mathbf{x}}\varphi(\mathbf{x})$ satisfies Assumption \ref{assumption:common}.

For the first simulation, we consider a planar, unbounded workspace $\mathcal{W}$ containing a single circular obstacle $\mathcal{O}_1$ with center $\mathbf{c}_1 \in \mathbb{R}^2$ and radius $r_1 > 0$, as illustrated in Fig. \ref{comparison_static_vs_dynamic_damping}.
The obstacle proximity function $h(\mathbf{x})$, defined according to \eqref{obstacle_proximity_function}, is given by
\begin{equation*}
h(\mathbf{x}) = \delta_2\left(\|\mathbf{x} -\mathbf{c}_1\|^2 - r_1^2\right),
\end{equation*}
where $\delta_2 = 0.01$.
The parameters are defined as $r = 0.5\,\text{m}$, $\epsilon_1 = 0.25\,\text{m}$, $\epsilon_2 = 0.75\,\text{m}$, $\delta_1 = 0.01$, and $\kappa = 6$.
The control gains are set to $k_1 = 2$ and $k_d = 1$. 
The target location $\mathbf{x}_d$ is set to $[0, 0]^\top\, \text{m}$.
The robot's center is initialized at $[-8, 0]^\top\,\text{m}$ with an initial velocity of $[2, -1]^\top\,\text{m/s}$.

In Fig. \ref{comparison_static_vs_dynamic_damping}, the red-colored $\mathbf{x}$-trajectory is obtained for the system \eqref{equation:second_order_system} with $\mathbf{u} = \mathbf{u}_f(\mathbf{x}, \mathbf{v})$, where $\mathbf{u}_f$ is given by
\begin{equation}\label{controller:fixed_damping}
    \mathbf{u}_f(\mathbf{x}, \mathbf{v}) = -k_1\nabla_{\mathbf{x}}\varphi(\mathbf{x}) - k_d\mathbf{v}.
\end{equation}
The blue-colored and the magenta-colored $\mathbf{x}$-trajectories are obtained using the DDF control \eqref{controller:NF} and the VTF control \eqref{controller:VF}, respectively.
It can be noticed that the $\mathbf{x}$-trajectory obtained using the fixed damping control \eqref{controller:fixed_damping} enters in the unsafe region indicating collision between the robot and obstacle $\mathcal{O}_1$.
In contrast, the robot controlled using the proposed control schemes safely avoids  $\mathcal{O}_1$ and asymptotically converges to $[\mathbf{x}_d, \mathbf{0}]^\top$.

\begin{figure}
    \centering
    \includegraphics[width=1\linewidth]{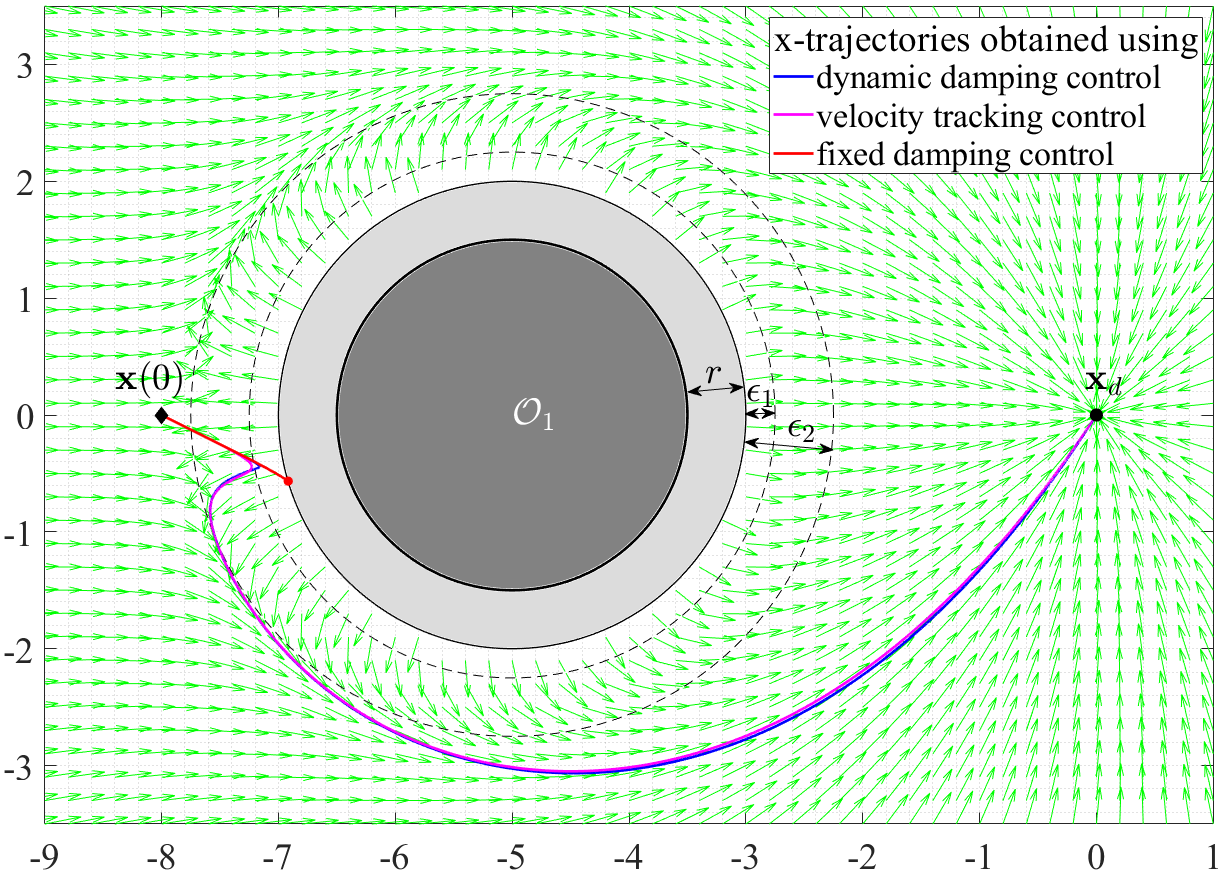}
    \caption{Robot $\mathbf{x}$-trajectories under the DDF control law (blue curve), the VTF control law (magenta curve), and the fixed damping control law (red curve) which are defined in \eqref{controller:NF}, \eqref{controller:VF}, and \eqref{controller:fixed_damping}, respectively.}
    \label{comparison_static_vs_dynamic_damping}
\end{figure}

\begin{figure}[h]
    \centering
    \begin{subfigure}{\linewidth}
        \centering
        \includegraphics[width=0.8\linewidth]{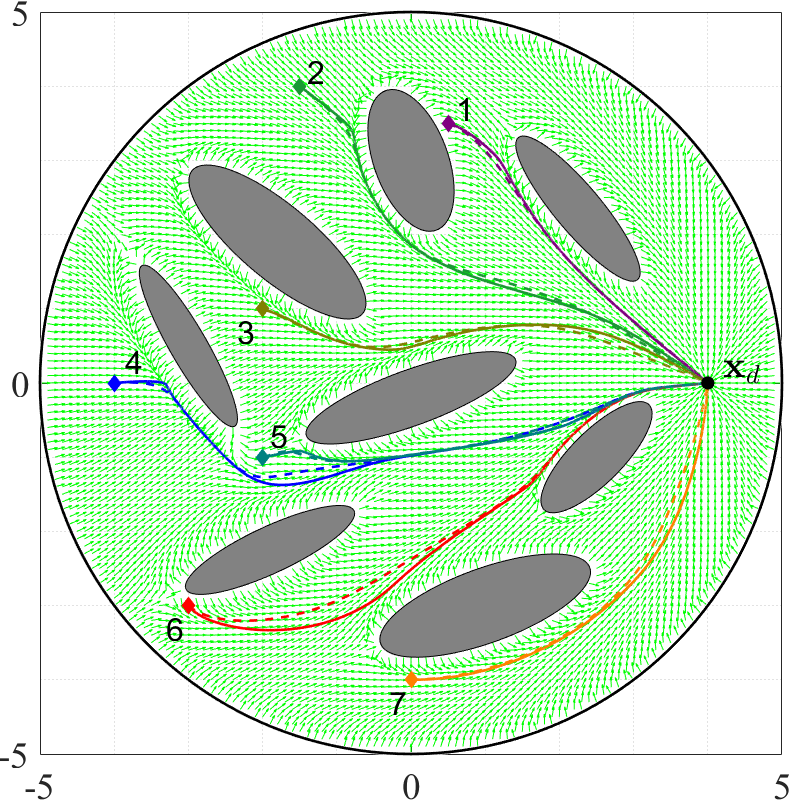}
        \caption{}
        \label{fig:multiple_obstacles_NF}
    \end{subfigure}

    \begin{subfigure}{\linewidth}
        \centering
        \begin{tabular}{|c|c|c|c|c|c|c|c|}
         \hline 
         & 1 & 2 & 3 & 4 & 5 & 6 & 7  \\ 
         \hline 
         \makecell{DDF} & 8.79& 8.17& 7.15& 6.28 & 5.00 & 6.18 & 6.30 \\ 
         \hline 
         \makecell{VTF} & 8.66 & 8.01 & 7.11 & 6.22 & 4.99 & 6.17 & 6.26 \\ 
         \hline
        \end{tabular}
        \caption{}
        \label{tab:path_length}
    \end{subfigure}
    \caption{(a) Robot $\mathbf{x}$-trajectories under the DDF control law and the VTF control law, represented using solid curves and dashed curves, respectively. (b) Path lengths of $\mathbf{x}$-trajectories in meters.}
    \label{fig:combined}
\end{figure}

For the second simulation, we consider a circular workspace $\mathcal{W}$ with $8$ elliptical obstacles as shown in Fig. \ref{fig:multiple_obstacles_NF}.
The obstacle proximity function $h(\mathbf{x})$ is given by \eqref{obstacle_proximity_function}, where for each $i\in\mathbb{I}\setminus\{0\}$, the scalar function $h_i(\mathbf{x})$ associated with obstacle $\mathcal{O}_i$ with center $\mathbf{c}_i\in\mathcal{W}$ is given by
\begin{equation*}
    h_i(\mathbf{x}) = \delta_2((\mathbf{x} - \mathbf{c}_i)^\top\mathbf{S}_i(\mathbf{x} - \mathbf{c}_i) - 1),
\end{equation*}
where $\delta_2 > 0$,
and the $2\times 2$ positive definite matrix $\mathbf{S}_i$ determines the shape and the orientation of $\mathcal{O}_i$.
For $i = 0$, the scalar function $h_0(\mathbf{x}) = \delta_2(r_0^2 - \|\mathbf{x} - \mathbf{c}_0\|^2)$.

The point robot is initialized at seven different locations which are marked by diamond symbols in Fig. \ref{fig:multiple_obstacles_NF}, with initial velocities set to $[0, 0]^\top$ m/s.
The target location is $\mathbf{x}_d=[4, 0]^\top$ m. 
The parameters are set as $\epsilon_1 = 0.5\,\text{m}$, $\epsilon_2 = 1.5\,\text{m}$, $\delta_1 = 0.01$, $\delta_2 = 0.01$, and $\kappa = 25$.
The control gains are chosen as $k_1 = 5$ and $k_d = 1$.

In Fig.~\ref{fig:multiple_obstacles_NF}, solid curves represent the robot's $\mathbf{x}$-trajectories under the DDF control input \eqref{controller:NF}, while dashed curves show trajectories obtained using the VTF control input \eqref{controller:VF}. All trajectories successfully avoid obstacles and asymptotically converge to $\mathbf{x}_d$. 
Since the VTF control input \eqref{controller:VF} ensures a monotonic decrease in the magnitude of the difference between $\mathbf{v}(t)$ and $\mathbf{v}_d(\mathbf{x}(t))$ for all $t \geq 0$, as stated in Theorem \ref{theorem:VF}, the robot's trajectory resembles the $\mathbf{x}$-trajectory of the first-order system $\dot{\mathbf{x}} = \mathbf{v}_d(\mathbf{x})$ as $t$ increases. Consequently, the path length under VTF control is generally shorter than that under DDF control, as shown in Table~\ref{tab:path_length}.

For the next simulation, we consider the modified version of the first-order motion planner proposed in \cite{kumar2022navigation},
with modifications discussed later in Remark \ref{remark:modifications}.
The workspace $\mathcal{W}$ is a convex subset of $\mathbb{R}^n$ and contains spherical and ellipsoidal obstacles $\mathcal{O}_i$, $i \in \{1, \ldots, m\}$, which are separated from one another and from the workspace boundary by a distance greater than $2r$.
The modified version of the first-order motion planner proposed in \cite{kumar2022navigation}
is given by the following equation:
\begin{equation}\label{user-defined-sensor-based-motion-planner_equation}
    \mathbf{v}_d(\mathbf{x}) =k_1\left[\frac{f(\mathbf{x})}{\kappa}  \sum_{i\in\mathbb{I}}g_i(\mathbf{x})(\mathbf{x} - \mathbf{x}_i)-h(\mathbf{x})\nabla_{\mathbf{x}}f(\mathbf{x})\right],
\end{equation}
where $k_1 > 0$, $\kappa > 0$.
For each $i\in\{1, \ldots, m\}$, $\mathbf{x}_i$ is a fixed point belonging to obstacle $\mathcal{O}_i$, and $\mathbf{x}_0\in\mathcal{X}_r^{\circ}$.

The scalar function $f(\mathbf{x})$ is given by \eqref{definition:attractive_potential}. 
The obstacle proximity function $h(\mathbf{x})$ is defined according to \eqref{obstacle_proximity_function},
where for each $i\in\mathbb{I}$, the scalar function $h_i(\mathbf{x})$ is defined as
\begin{equation}\label{modified_individual_obstacle_function}
    h_i(\mathbf{x}) = \begin{cases}
        d(\mathbf{x}, \mathcal{O}_i) - r, & d(\mathbf{x}, \mathcal{O}_i) - r \leq \epsilon_1, \\
        \phi_1(d(\mathbf{x}, \mathcal{O}_i) - r), & \epsilon_1 \leq d(\mathbf{x}, \mathcal{O}_i) - r \leq \epsilon_2,\\
        1, & d(\mathbf{x}, \mathcal{O}_i) - r \geq \epsilon_2,
    \end{cases}
\end{equation}
with $\epsilon_1 \in(0, 1)$ and $\epsilon_2 > \epsilon_1$.
For a given $p \in \mathbb{R}$, the mapping $\phi_1: [\epsilon_1, \epsilon_2] \to [\epsilon_1, 1]$ is continuously differentiable and monotonically increasing, such that $\phi_1(\epsilon_1) = \epsilon_1$, $\phi_1(\epsilon_2) = 1$, $\phi_1^{\prime}(\epsilon_1) = 1$, and $\phi_1^{\prime}(\epsilon_2) = 0$.

In \eqref{user-defined-sensor-based-motion-planner_equation}, for $i \in \{1, \ldots, m\}$, the scalar function $g_i(\mathbf{x})$ is given by \begin{equation}\label{modification:gi_function}g_i(\mathbf{x}) = (1 - h_i(\mathbf{x})) \bar{h}_i(\mathbf{x}),
\end{equation} and for $i = 0$, $g_0(\mathbf{x}) = (h_0(\mathbf{x}) - 1) \bar{h}_0(\mathbf{x})$, where
\begin{equation}\label{omitted_product}
\bar{h}_i(\mathbf{x}) = \prod_{j \in\mathbb{I}, j \ne i} h_j(\mathbf{x}),
\end{equation}
for every $i \in \mathbb{I}$.
Similar to \cite[Theorem 1]{kumar2022navigation}, it can be shown that there exists $\kappa_{\min} > 0$ such that for $\kappa \geq \kappa_{\min}$, the first-order motion planner in \eqref{user-defined-sensor-based-motion-planner_equation} satisfies Assumption \ref{assumption:common}.

\begin{remark}\label{remark:modifications}
    In comparison with the first-order motion planner proposed in \cite[Section 3.1]{kumar2022navigation} (hereafter referred to as the \textit{original} planner), the first-order motion planner defined in \eqref{user-defined-sensor-based-motion-planner_equation} (hereafter referred to as the \textit{modified} planner) introduces the following modifications:
    \begin{enumerate}
        \item In the \textit{original} planner, the definition of $h_i(\mathbf{x})$ requires global information about obstacle $\mathcal{O}_i$ for every $i \in \mathbb{I}$.
        Additionally, the definition of $h_i(\mathbf{x})$ in \cite[Assumption 3]{kumar2022navigation} is such that the value of $h_i(\mathbf{x})$ strictly increases as the robot moves away from obstacle $\mathcal{O}_i$.
        As a result, as the number of obstacles increases, the value of $h(\mathbf{x})$, defined according to \eqref{obstacle_proximity_function}, becomes very large, which increases the magnitude of the \textit{original} planner.
        This can make it challenging to implement the \textit{original} planner with sufficiently large step sizes. Therefore, the normalized version of the \textit{original} planner, as implemented in \cite[Section 5]{kumar2022navigation}, is often preferred.
        On the other hand, in the \textit{modified} planner, according to \eqref{obstacle_proximity_function} and \eqref{modified_individual_obstacle_function}, the value of $h(\mathbf{x}) \in [0, 1]$ for all $\mathbf{x} \in \mathcal{X}_r$.
        Additionally, for any $i \in \mathbb{I}$, the definition of $h_i(\mathbf{x})$ in \eqref{modified_individual_obstacle_function} only requires the distance between the robot and obstacle $\mathcal{O}_i$
        which can be obtained using range-bearing sensor measurements.

        \item In the \textit{original} planner, $g_i(\mathbf{x})$ in \eqref{user-defined-sensor-based-motion-planner_equation} is replaced by $\bar{h}_i(\mathbf{x})$ for all $i \in \mathbb{I}$, where $\bar{h}_i(\mathbf{x})$ is defined in \eqref{omitted_product}.
        Therefore, even when the robot is far from the obstacles, its trajectory is influenced by the repulsive vector field components of the \textit{original} planner.
        Furthermore, as the number of obstacles increases, higher values of the tuning parameter $\kappa$ must be chosen to mitigate the effects of these combined repulsive vector field components far away from obstacles. 
        In contrast, in the \textit{modified} planner, since, according to \eqref{modified_individual_obstacle_function}, $h_i(\mathbf{x}) \in [0, 1]$ for all $\mathbf{x} \in \mathcal{X}_r$ and for every $i \in \mathbb{I}$, the use of $g_i(\mathbf{x})$, as defined in \eqref{modification:gi_function}, ensures that when the robot is more than $\epsilon_2$ units away from obstacle $\mathcal{O}_i$, the repulsive vector field component associated with $\mathcal{O}_i$ in \eqref{user-defined-sensor-based-motion-planner_equation} vanishes.
        This guarantees that when the robot is more than $\epsilon_2$ units away from any obstacle,
        the repulsive vector field component of the \textit{modified} planner vanishes.
    \end{enumerate}
\end{remark}

It is a challenging task to design a scalar function $\varphi(\mathbf{x})$ whose negative gradient with respect to $\mathbf{x}$ is $\mathbf{v}_d(\mathbf{x})$, as defined in \eqref{user-defined-sensor-based-motion-planner_equation}. 
Therefore, implementing the DDF control defined in \eqref{controller:NF} is not feasible.
However, it can be verified that the \textit{modified} planner $\mathbf{v}_d$ in \eqref{user-defined-sensor-based-motion-planner_equation} is continuously differentiable for all $\mathbf{x} \in \mathcal{X}_r$, and that $\nabla_{\mathbf{x}} \mathbf{v}_d(\mathbf{x})$ is bounded for all $\mathbf{x} \in \mathcal{X}_r$. 
Consequently, since the \textit{modified} planner satisfies Assumption \ref{assumption:common} and the additional requirements in Problem \ref{problem_with_VF}, we can implement the VTF control defined in \eqref{controller:VF}.

We consider a planar, bounded workspace $\mathcal{W}$ containing 10 obstacles, as shown in Fig. \ref{vel_track_robot_profile}.
The robot is initialized at 9 different locations, indicated by the diamond symbols, and is equipped with a range sensor of sensing radius $R_s = 2$ m.
The target location $\mathbf{x}_d$ is set to $[0, 0]^\top$ m.
For each trajectory, the robot's initial velocity is randomly chosen from the standard normal distribution.
The parameters are defined as $r = 0.1\,\text{m}$, $\epsilon_1 = 0.5\,\text{m}$, $\epsilon_2 = 1.5\,\text{m}$, $\delta_1 = 0.5$, and $\kappa = 10$.
The control gains are set to $k_1 = 0.5$ and $k_d = 0.5$.
We assume that when the robot enters in the $\epsilon_2$-neighborhood of any obstacle $\mathcal{O}_i$, where $i\in\mathbb{I}\setminus\{0\}$, it can identify the location of the fixed point $\mathbf{x}_i$, used in \eqref{user-defined-sensor-based-motion-planner_equation}.
Since $\mathbf{x}_d\in\mathcal{X}_r^{\circ}$, the location $\mathbf{x}_0$ associated with $\mathcal{O}_0$ is set to $\mathbf{x}_d$.

Figure \ref{vel_track_robot_profile} illustrates the robot’s $\mathbf{x}$-trajectories from $9$ different initial positions, each with a randomly chosen initial velocity. 
The results show that all $\mathbf{x}$-trajectories asymptotically converge to the desired target location $\mathbf{x}_d$.
The robot's safety can be inferred from the evolution of $d_{\mathbf{x}}(t)$, shown in Fig. \ref{vel_track_distance_profile}, which remains positive for all time, ensuring that the robot does not collide with obstacles.
In Fig. \ref{vel_track_distance_profile} and Fig. \ref{vel_track_norm_profile}, each trajectory is associated with the robot trajectory of the same color in Fig. \ref{vel_track_robot_profile}.
Interestingly, the VTF control \eqref{controller:VF} ensures a monotonic decrease in the magnitude of the difference between the robot's velocity $\mathbf{v}(t)$ and the first-order motion planner $\mathbf{v}_d(\mathbf{x}(t))$ for all $t\geq 0$, as established earlier in Claim \ref{theorem:VF_decrease} of Theorem \ref{theorem:VF}, and is also illustrated in Fig. \ref{vel_track_norm_profile}.

\begin{figure}
    \centering
    \includegraphics[width=1\linewidth]{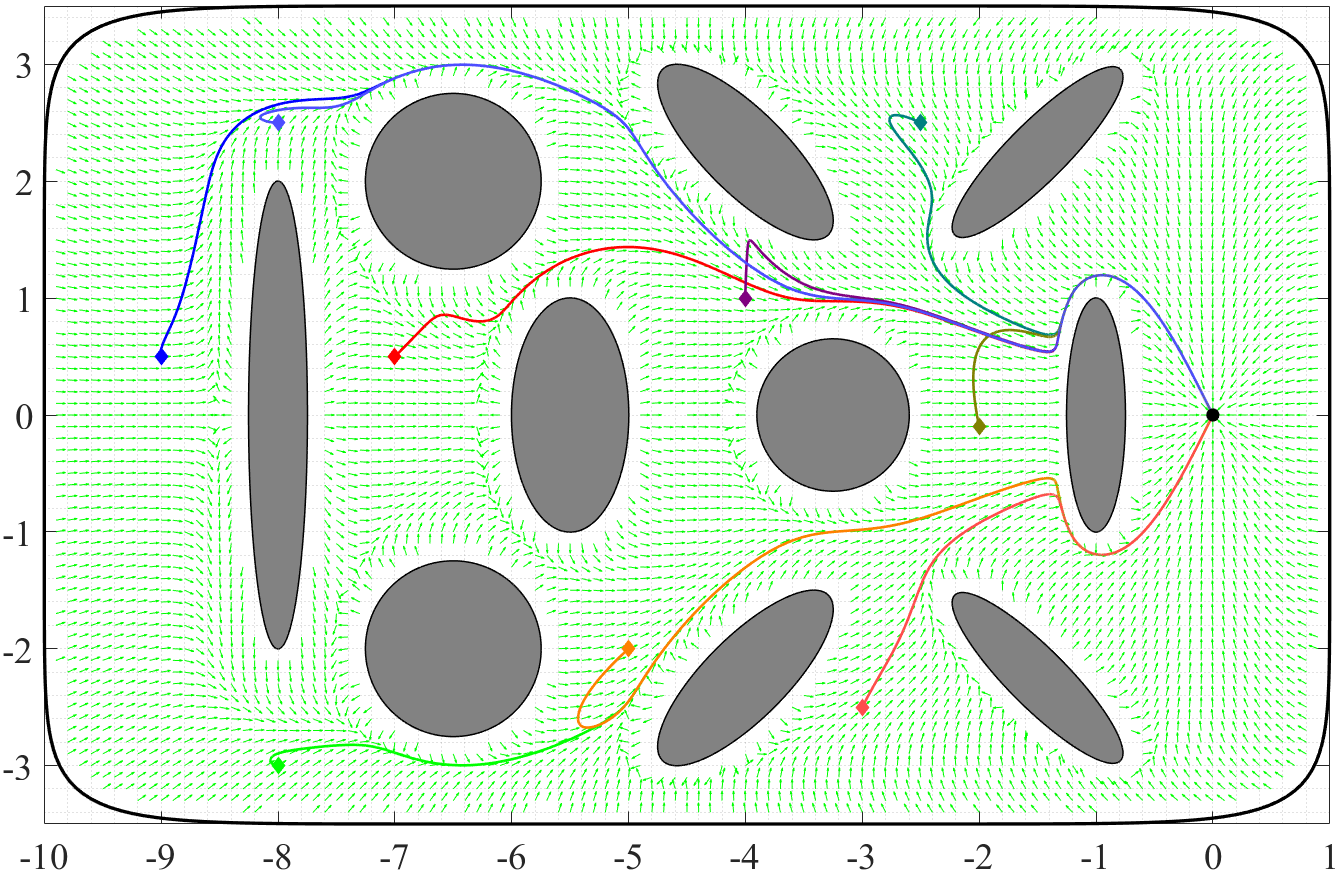}
    \caption{Robot $\mathbf{x}$-trajectories obtained using the VTF control law \eqref{controller:VF}.}
    \label{vel_track_robot_profile}
\end{figure}
\begin{figure}
    \centering
    \includegraphics[width=1\linewidth]{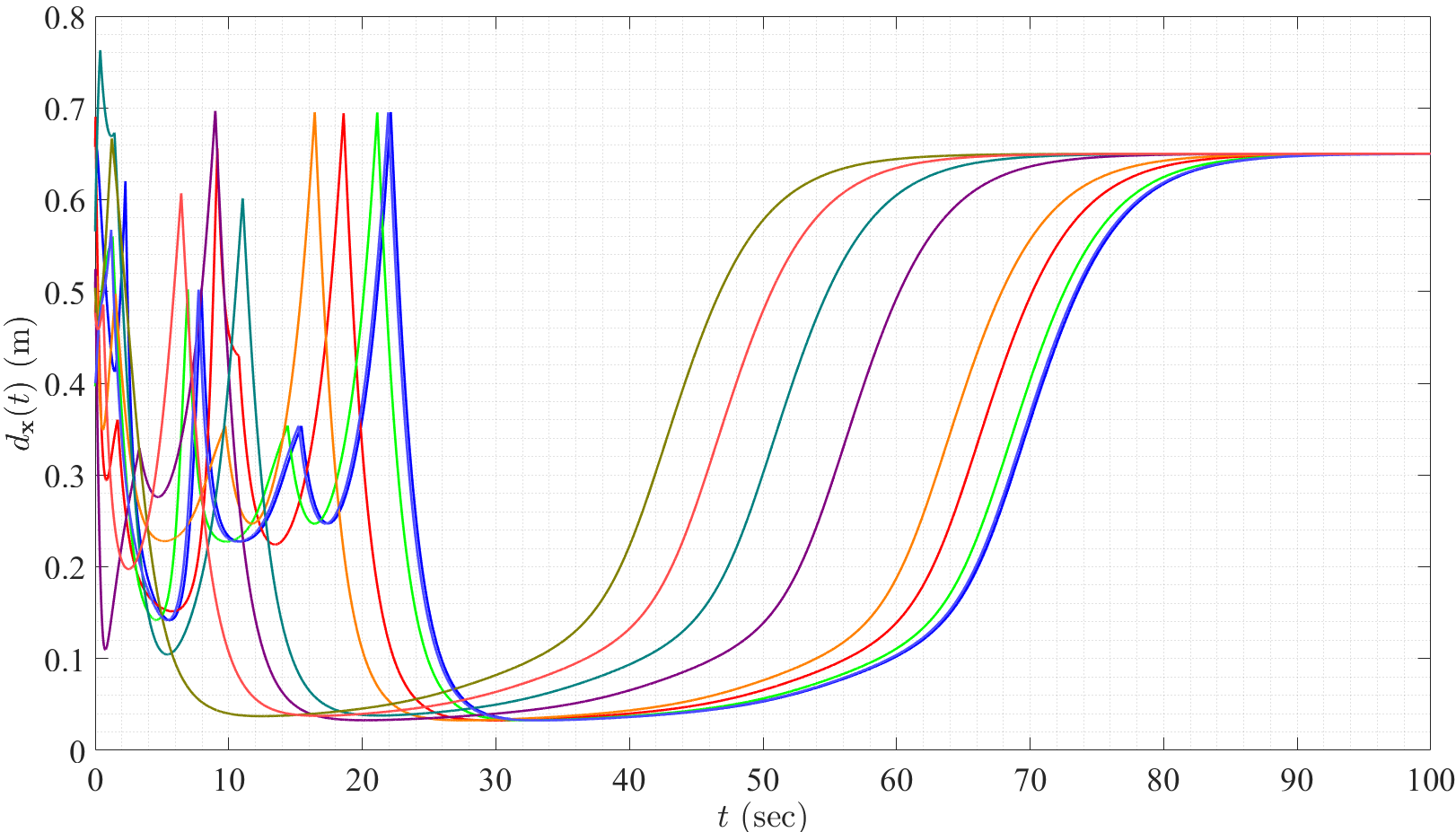}
    \caption{Evolution of $d_{\mathbf{x}}(t)$, which is evaluated according to \eqref{distance_function_d_x}.}
    \label{vel_track_distance_profile}
\end{figure}
\begin{figure}
    \centering
    \includegraphics[width=1\linewidth]{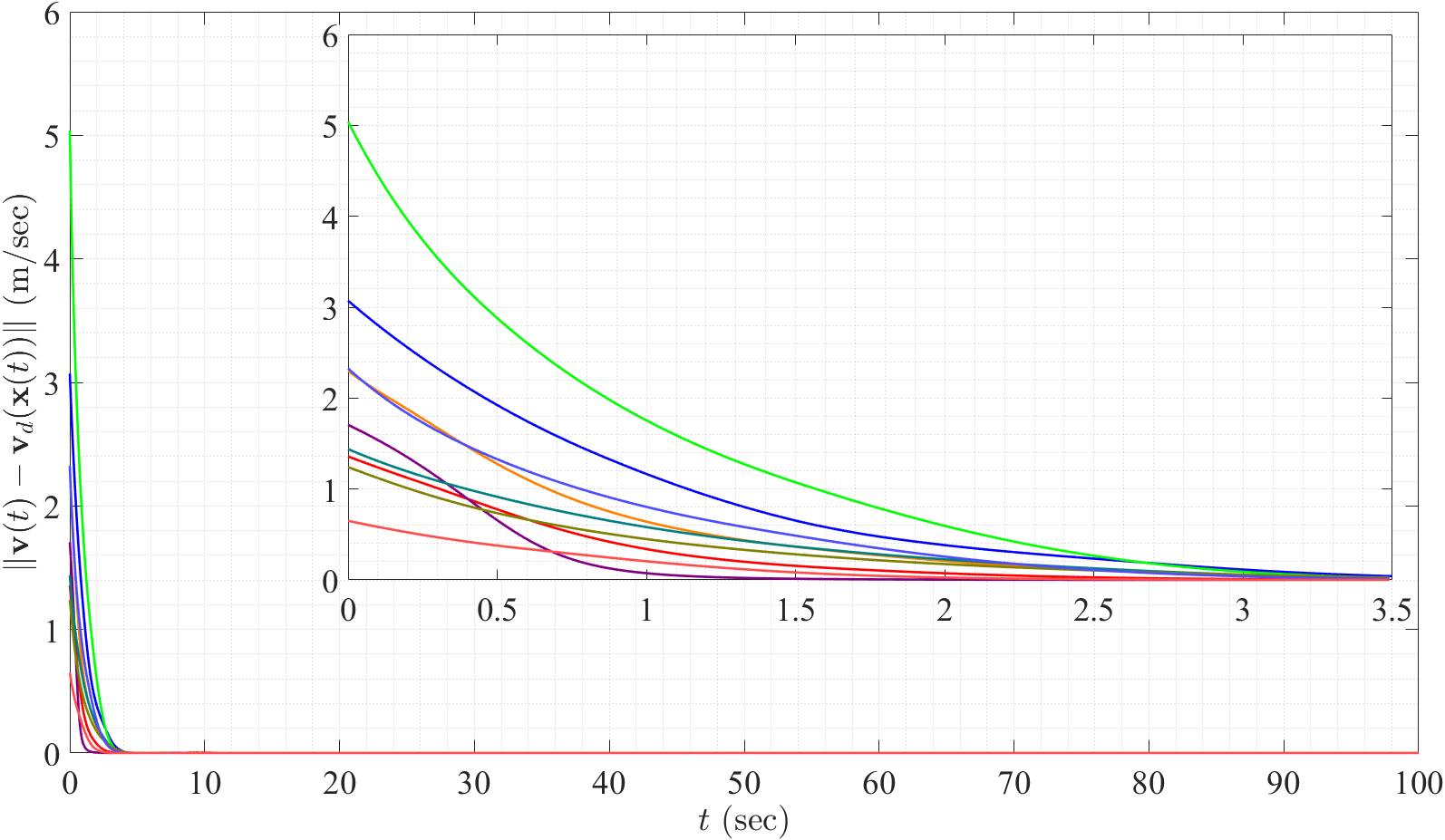}
    \caption{Monotonic decrease in the magnitude of the difference between $\mathbf{v}(t)$ and $\mathbf{v}_d(\mathbf{x}(t))$.}
    \label{vel_track_norm_profile}
\end{figure}

\section{Conclusion}\label{section:conclusion}
The problem of extending first-order motion planners to the robot governed by second-order dynamics is considered.
When a given first-order motion planner is derived as the negative gradient of a scalar function, as assumed in Problem \ref{problem_with_NF}, the DDF control design in \eqref{controller:NF} ensures safety and guarantees almost global asymptotic stability of $(\mathbf{x}_d, \mathbf{0})$ for the resulting closed-loop system over $\mathcal{X}_r^{\circ}\times\mathbb{R}^n$.
If no such function is available, the VTF control design \eqref{controller:VF} ensures a monotonic decrease in the magnitude of the difference between the robot's velocity and the first-order motion planner, as illustrated in Fig. \ref{vel_track_norm_profile}, provided that the first-order motion planner is continuously differentiable.
This guarantees almost global asymptotic stability of $(\mathbf{x}_d, \mathbf{0})$ for the resulting closed-loop system over $\mathcal{X}_r^{\circ}\times\mathbb{R}^n$ while ensuring obstacle avoidance.
The effectiveness of the proposed control schemes is validated through simulation studies.

\begin{appendix}

\subsection{Proof of Lemma \ref{lemma:NF}}\label{proof:lemma:NF}
\subsubsection{Proof of Claim \ref{lemma:NF_claim1}}
The proof is inspired by the proof of Claim 1 of \cite[Lemma 1]{tang2023constructive}. 
We proceed by contradiction.
Assume that there exists finite $T>0$ such that $d_{\mathbf{x}}(T) = 0$.
This implies the existence of $t_{1}\geq 0$ such that $t_1<T$, $d_{\mathbf{x}}(t_{1})\in(0, \rho]$, and $\dot{d}_{\mathbf{x}}(t) < 0$ over the interval $[t_1, T)$, with $\rho = \min\{\delta_d, \epsilon_1\}$, 
where the existence of $\delta_d >0$ is assumed in Condition \ref{assume:condition4} of Assumption \ref{assumption:common} and $\epsilon_1$ is defined in \eqref{beta_function_definition}.

Since, according to Assumption \ref{assumption:common}, $\delta_d\leq \delta_u$, $\rho\in(0, \delta_d]$, and $\mathbf{x}(t)\in\mathcal{N}_{\rho}(\mathcal{X}_r^c)$ for all $t\in[t_1, T)$, one has
\begin{equation}\label{dotdx}
    \dot{d}_{\mathbf{x}}(t) = \eta(\mathbf{x}(t))^\top\mathbf{v}(t),
\end{equation}
for all $t\in[t_1, T)$, where $\eta(\mathbf{x})$ is defined in \eqref{gradient_of_distance}.
Taking the time derivative of \eqref{dotdx} and using \eqref{controller:NF}, one obtains
\begin{equation}\label{NF_distance_double_derivative}
\begin{aligned}
    \ddot{d}_{\mathbf{x}}(t) = &- k_d\beta(d_{\mathbf{x}}(t))\dot{d}_{\mathbf{x}}(t) - k_1\eta(\mathbf{x}(t))^\top\nabla_{\mathbf{x}}\varphi(\mathbf{x}(t))\\&+\mathbf{v}(t)^\top\mathbf{H}(\mathbf{x}(t))\mathbf{v}(t),
    \end{aligned}
\end{equation}
where $\mathbf{H}(\mathbf{x}(t)) = \nabla_{\mathbf{x}}^2d_{\mathbf{x}}(t)$.
Since, according to \eqref{beta_function_definition},  $\beta(d_{\mathbf{x}}) = d_{\mathbf{x}}^{-1}$ for all $\mathbf{x}\in\mathcal{N}_{\rho}(\mathcal{X}_r^c)$, it follows that
\begin{equation}\label{double_differentiation_equation_NF}
\begin{aligned}
    k_d\frac{\dot{d}_{\mathbf{x}}(t)}{d_{\mathbf{x}}(t)} = &-\ddot{d}_{\mathbf{x}}(t)-k_1\eta(\mathbf{x}(t))^\top\nabla_{\mathbf{x}}\varphi(\mathbf{x}(t))\\  &+ \mathbf{v}(t)^\top\mathbf{H}(\mathbf{x}(t))\mathbf{v}(t).
    \end{aligned}
\end{equation}

Integrating \eqref{double_differentiation_equation_NF} with respect to time from $t_{1}$ to $t$, one obtains
\begin{equation}\label{equation_after_integration_NF}
    \begin{aligned}k_d&\left(\ln(d_{\mathbf{x}}(t)) - \ln(d_{\mathbf{x}}({t_{1}}))\right) =  \dot{d}_{\mathbf{x}}({t_{1}}) - \dot{d}_{\mathbf{x}}(t)\\&-k_1\int_{t_{1}}^t\eta(\mathbf{x})^\top\nabla_{\mathbf{x}}\varphi(\mathbf{x}) d\tau + \int_{t_{1}}^t\mathbf{v}^\top\mathbf{H}(\mathbf{x})\mathbf{v}d\tau.
    \end{aligned}
\end{equation}
As $t\to T$, the left-hand side of $\eqref{equation_after_integration_NF}$ approaches $-\infty$.
We proceed to analyze the right-hand side of $\eqref{equation_after_integration_NF}$ as $t\to T$.

Since $\dot{d}_{\mathbf{x}}(t)<0$ for all $t\in[t_1, T)$, $\dot{d}_{\mathbf{x}}(t)$ either is bounded from below or tends to $-\infty$ as $t\to T$.
Additionally, since $\mathbf{x}(t)\in\mathcal{N}_{\rho}(\mathcal{X}_r^c)$ for all $t\in[t_{1}, T)$, by Condition \ref{assume:condition4} of Assumption \ref{assumption:common}, the inequality $-\eta(\mathbf{x}(t))^\top\nabla_{\mathbf{x}}\varphi(\mathbf{x}(t)) > 0$ holds for all $t\in[t_{1}, T)$.
Now, if one shows that
$\Lim_{t\to T}\int_{t_{1}}^t\mathbf{v}^\top\mathbf{H}(\mathbf{x})\mathbf{v}d\tau\ne-\infty$,
then it will imply that the right-hand side of \eqref{equation_after_integration_NF} either remains bounded or tends to $\infty$ as $t\to T$, thereby leading to a contradiction.

Now, we evaluate $\Lim_{t\to T}\int_{t_{1}}^t\mathbf{v}^\top\mathbf{H}(\mathbf{x})\mathbf{v}d\tau$.
Define $V_d = k_1\varphi(\mathbf{x}) + \frac{1}{2}\|\mathbf{v}\|^2$, where $\varphi(\mathbf{x})$ is a known positive definite function with respect to $\mathbf{x}_d$, as defined in Problem \ref{problem_with_NF}. Taking the time derivative of $V_d$ and using \eqref{controller:NF}, one gets
\begin{equation}\label{gradient_of_Vp}
    \dot{V}_d = -k_d\beta(d_{\mathbf{x}})\|\mathbf{v}\|^2.
\end{equation}
Since, according to \eqref{beta_function_definition}, $\beta(d_{\mathbf{x}}) \geq 1$ for all $\mathbf{x}\in\mathcal{X}_r^{\circ}$, and $\beta(d_{\mathbf{x}})$ is undefined only if $d_{\mathbf{x}} = 0$, it is true that $\dot{V}_d \leq 0$ as long as $d_{\mathbf{x}} > 0$. 
Therefore,  since $d_{\mathbf{x}} > 0$ for all $\mathbf{x}\in\mathcal{X}_r^{\circ}$, and $\mathbf{x}(t)\in\mathcal{N}_{\rho}(\mathcal{X}_r^c)\subset\mathcal{X}_r^{\circ}$ for all $t\in[t_1, T)$, it follows that $\mathbf{v}(t)$ is bounded for all $t\in[t_1, T)$.
Additionally, according to Assumption \ref{assumption:conditions_on_unsafe_set}, $\mathbf{H}(\mathbf{x}(t))$ is bounded for all $t\in[t_1, T)$.
Consequently, since $T$ is finite, $\Lim_{t\to T}\int_{t_1}^t\mathbf{v}^\top\mathbf{H}(\mathbf{x})\mathbf{v}d\tau\ne-\infty$,
and the proof of Claim \ref{lemma:NF_claim1} of Lemma \ref{lemma:NF} is complete.

\subsubsection{Proof of Claim \ref{lemma:NF_claim2}}

Let $\sigma  = \min\left\{\rho, \frac{k_d}{D}\sqrt{\frac{\mu}{|\lambda_{\mathbf{H}}^{\min}|}}\right\}$, where the existence of the positive scalar parameters $\mu$ and $D$ is assumed in Assumption \ref{assumption:common}, $\lambda_{\mathbf{H}}^{\min}$ is the smallest eigenvalue of $\mathbf{H}(\mathbf{x})$ over $\mathcal{N}_{\sigma}(\mathcal{X}_r^c)$, and $\rho = \min\{\delta_d, \epsilon_1\}$.
The existence $\lambda_{\mathbf{H}}^{\min}$ is implied by Assumption \ref{assumption:conditions_on_unsafe_set} and the fact that $\sigma \leq \rho \leq \delta_d \leq \delta_u$, where the relation $\delta_d\leq \delta_u$ is assumed in Condition \ref{assume:condition4} of Assumption \ref{assumption:common}.
If $\lambda_{\mathbf{H}}^{\min} = 0$, then we set ${\sigma} = \rho$.

In light of Claim \ref{lemma:NF_claim1} of Lemma \ref{lemma:NF}, there are two possible cases: either there exists $t_{\sigma} \geq 0$ such that $d_{\mathbf{x}}(t) \in (0, \sigma]$ for all $t \geq t_{\sigma}$, or no such $t_{\sigma}$ exists. In the latter case, it follows trivially that $\Lim_{t\to\infty} d_{\mathbf{x}}(t) \neq 0$. Therefore, we proceed by considering the former case. 
To do so, we first establish the following fact:


\begin{fact}\label{fact:eventual_boundedness_of_v}
Consider the closed-loop system \eqref{equation:second_order_system}-\eqref{controller:NF} under Assumptions \ref{assumption:conditions_on_unsafe_set} and \ref{assumption:common}.
If, for any $\sigma > 0$, there exists $t_{\sigma}\geq 0$ such that $d_{\mathbf{x}}(t)\in(0, \sigma]$ for all $t\geq t_{\sigma}$, then there exists $t_{\sigma}^{\mathbf{v}}\geq t_{\sigma}$ such that $\mathbf{v}(t)\in\mathcal{B}_{\gamma(\sigma)}(\mathbf{0})$ for all $t\geq t_{\sigma}^{\mathbf{v}}$, where $\gamma(\sigma) = \frac{D}{k_d\beta(\sigma)}$.
\end{fact}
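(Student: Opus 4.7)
The plan is to run a Lyapunov-type argument on the kinetic-energy quantity $V(\mathbf{v}) = \tfrac{1}{2}\|\mathbf{v}\|^2$ over the interval $[t_\sigma,\infty)$, on which the hypothesis $d_{\mathbf{x}}(t)\in(0,\sigma]$ forces the damping coefficient $k_d\beta(d_{\mathbf{x}}(t))$ to be uniformly lower bounded by $k_d\beta(\sigma)$. Combined with the a priori bound $\|\mathbf{v}_d(\mathbf{x})\|\le D$ from Condition \ref{assume:condition5} of Assumption \ref{assumption:common}, this turns the second-order dynamics into a forced scalar dissipation inequality for $\|\mathbf{v}\|$ whose attracting level is exactly $\gamma(\sigma)$.

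First I would differentiate $V$ along the closed-loop system \eqref{equation:second_order_system}-\eqref{controller:NF}, identify $-k_1\nabla_{\mathbf{x}}\varphi(\mathbf{x})$ with $\mathbf{v}_d(\mathbf{x})$ as imposed by Problem \ref{problem_with_NF}, and use Cauchy--Schwarz together with $\|\mathbf{v}_d\|\le D$ to obtain $\dot V \le D\|\mathbf{v}\| - k_d\beta(d_{\mathbf{x}})\|\mathbf{v}\|^2$. Next I would exploit the monotonically non-increasing nature of $\beta$ on $(0,\infty)$, which is immediate from its piecewise definition \eqref{beta_function_definition}: since $d_{\mathbf{x}}(t)\in(0,\sigma]$ on $[t_\sigma,\infty)$, we have $\beta(d_{\mathbf{x}}(t))\ge\beta(\sigma)$, whence
\begin{equation*}
\dot V(t) \;\le\; k_d\beta(\sigma)\,\|\mathbf{v}(t)\|\bigl(\gamma(\sigma)-\|\mathbf{v}(t)\|\bigr),\qquad t\ge t_\sigma.
\end{equation*}
Two consequences follow: $\dot V\le 0$ on the sphere $\{\|\mathbf{v}\|=\gamma(\sigma)\}$, so the closed ball $\mathcal{B}_{\gamma(\sigma)}(\mathbf{0})$ is forward invariant for $\mathbf{v}$ from $t_\sigma$ onward; and $\dot V<0$ whenever $\|\mathbf{v}\|>\gamma(\sigma)$.

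To dispose of the case $\|\mathbf{v}(t_\sigma)\|>\gamma(\sigma)$, I would differentiate $\|\mathbf{v}\|$ directly where $\mathbf{v}\ne\mathbf{0}$, reapply Cauchy--Schwarz, and derive the scalar comparison inequality $\tfrac{d}{dt}\|\mathbf{v}(t)\|\le D - k_d\beta(\sigma)\|\mathbf{v}(t)\|$. A standard comparison argument then yields the exponentially decaying envelope
\begin{equation*}
\|\mathbf{v}(t)\|\;\le\;\gamma(\sigma) + \bigl(\|\mathbf{v}(t_\sigma)\|-\gamma(\sigma)\bigr)\,e^{-k_d\beta(\sigma)(t-t_\sigma)},
\end{equation*}
so that $\|\mathbf{v}(t)\|\to\gamma(\sigma)$; combining this with the forward invariance of $\mathcal{B}_{\gamma(\sigma)}(\mathbf{0})$ produces a time $t_\sigma^{\mathbf{v}}\ge t_\sigma$ after which the velocity sits inside the closed ball. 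Setting $t_\sigma^{\mathbf{v}}=t_\sigma$ when $\|\mathbf{v}(t_\sigma)\|\le\gamma(\sigma)$ and the entrance time otherwise completes the argument.

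The main obstacle I anticipate is precisely this last step: the exponential envelope predicts only asymptotic, not strict finite-time, approach to $\partial\mathcal{B}_{\gamma(\sigma)}(\mathbf{0})$ when the initial velocity is exterior to the ball. A clean way around it is to first establish the analogous statement for any arbitrarily small enlargement $\gamma(\sigma)+\epsilon$, which is reached in finite time, and then pass to the closed ball by combining the strict monotone decrease of $\|\mathbf{v}\|$ outside it with forward invariance inside it. Everything else --- the monotonicity of $\beta$, the bound on $\mathbf{v}_d$, and the forward invariance of $\mathcal{B}_{\gamma(\sigma)}(\mathbf{0})$ --- is routine Lyapunov bookkeeping.
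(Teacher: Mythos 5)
Your proposal follows essentially the same route as the paper's own proof: the same Lyapunov function $\tfrac{1}{2}\|\mathbf{v}\|^2$, the same dissipation bound $\dot L_p \le D\|\mathbf{v}\| - k_d\beta(\sigma)\|\mathbf{v}\|^2$ obtained from Condition 5 of Assumption 2 and the monotonicity of $\beta$, and the same conclusion of strict decrease outside $\mathcal{B}_{\gamma(\sigma)}(\mathbf{0})$ together with forward invariance of that ball. The finite-time-entry subtlety you flag is simply asserted in the paper (strict decrease outside the ball is taken to imply eventual entry), so your comparison-lemma and $\epsilon$-enlargement remarks add care rather than deviate from the paper's argument.
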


\proof{
    Define $L_d = \frac{1}{2}\|\mathbf{v}\|^2$. 
Taking the time derivative and using \eqref{controller:NF}, one obtains
\begin{equation}\label{gradient_of_Lp}
    \dot{L}_d = -k_1\mathbf{v}^\top\nabla_{\mathbf{x}}\varphi(\mathbf{x}) - k_d\beta(d_{\mathbf{x}})\|\mathbf{v}\|^2.
\end{equation}
Since $d_{\mathbf{x}}(t)\in(0, \sigma]$ for all $t\geq t_{\sigma}$, it is clear that $\mathbf{x}(t)\in\mathcal{X}_r^{\circ}$ for all $t\geq t_{\sigma}$.
Therefore, using Condition \ref{assume:condition5} of Assumption \ref{assumption:common}, one can ensure that $k_1\|\nabla_{\mathbf{x}}\varphi(\mathbf{x}(t))\|\leq D$ for all $t\geq t_{\sigma}$.
Additionally, since $d_{\mathbf{x}}(t)\in(0, \sigma]$ for all $t\geq t_{\sigma}$, according to \eqref{beta_function_definition}, $\beta(d_{\mathbf{x}}(t)) \geq\beta(\sigma)$ for all $t\geq t_{\sigma}$.
It follows that
\begin{equation}\label{gradient_of_lpt}
\dot{L}_d(t) \leq D\|\mathbf{v}(t)\| - k_d\beta(\sigma)\|\mathbf{v}(t)\|^2.
\end{equation}
for all $t\geq t_{\sigma}$.
Therefore, $\dot{L}_d(t)<0$ whenever $\mathbf{v}(t)\notin\mathcal{B}_{\gamma(\sigma)}(\mathbf{0})$, for all $t\geq t_{\sigma}$.
As a result, if $\|\mathbf{v}(t_1)\|>\gamma(\sigma)$ for some $t_1\geq t_{\sigma}$, then the inequality $\dot{L}_d(t)<0$ holds after $t_1$ until $\mathbf{v}(t)$ enters in the set $\mathcal{B}_{\gamma(\sigma)}(\mathbf{0})$.
This ensures the existence of $t_{\sigma}^{\mathbf{v}}\geq t_{1}$ such that $\mathbf{v}(t_{\sigma}^{\mathbf{v}})\in\mathcal{B}_{\gamma(\sigma)}(\mathbf{0})$.
Additionally, according to \eqref{gradient_of_lpt}, $\dot{L}_d(t)\leq0$ for all $(\mathbf{x}(t), \mathbf{v}(t))\in\mathcal{N}_{\sigma}(\mathcal{X}_r^c)\times\partial\mathcal{B}_{\gamma(\sigma)}(\mathbf{0})$ when $t\geq t_{\sigma}$. 
Consequently, since $\mathbf{v}(t_{\sigma}^{\mathbf{v}})\in\mathcal{B}_{\gamma(\sigma)}(\mathbf{0})$, $t_{\sigma}^{\mathbf{v}}\geq t_{1}\geq t_{\sigma}$ and $\mathbf{x}(t)\in\mathcal{N}_{\sigma}(\mathcal{X}_r^c)$ for all $t\geq t_{\sigma}$, it is true that $\mathbf{v}(t)\in\mathcal{B}_{\gamma(\sigma)}(\mathbf{0})$ for all $t\geq t_{\sigma}^{\mathbf{v}}$, and the proof is complete.
}

Assuming the existence of $t_{\sigma} \geq 0$ such that $d_{\mathbf{x}}(t) \in (0, \sigma]$ for all $t \geq t_{\sigma}$, Fact \ref{fact:eventual_boundedness_of_v} guarantees the existence of $t_{\sigma}^{\mathbf{v}} \geq t_{\sigma}$ such that $\mathbf{v}(t) \in \mathcal{B}_{\gamma(\sigma)}(\mathbf{0})$ for all $t \geq t_{\sigma}^{\mathbf{v}}$. 
Now, if we establish the existence of $t_s\geq t_{\sigma}^{\mathbf{v}}$ for some $\sigma_0 \in (0, \sigma)$ such that $d_{\mathbf{x}}(t) \notin (0, \sigma_0)$ for all $t \geq t_s$, then it follows that $\Lim_{t\to\infty} d_{\mathbf{x}}(t) \neq 0$.

Let $t_{1}\geq t_{\sigma}^{\mathbf{v}}$ such that $d_{\mathbf{x}}(t_1)\in(0, \sigma]$ and $\dot{d}_{\mathbf{x}}(t_{1})<0$.
The remaining proof is separated in two parts as follows:

\noindent{\bf Part 1:} We show that after $t_1$, $d_{\mathbf{x}}(t)$ does not strictly decrease and does not converge to $0$. 
    In other words, we prove that $\dot{d}_{\mathbf{x}}(t)$ cannot remain negative for all $t\geq t_{1}$. 
    This implies that there exists $t_2> t_{1}$ such that $d_{\mathbf{x}}(t_2)\in(0, \sigma)$ and $\dot{d}_{\mathbf{x}}(t_2) = 0$, as depicted in Fig. \ref{no_convergence_to_0_image}.


\begin{figure}
    \centering
    \includegraphics[width=0.9\linewidth]{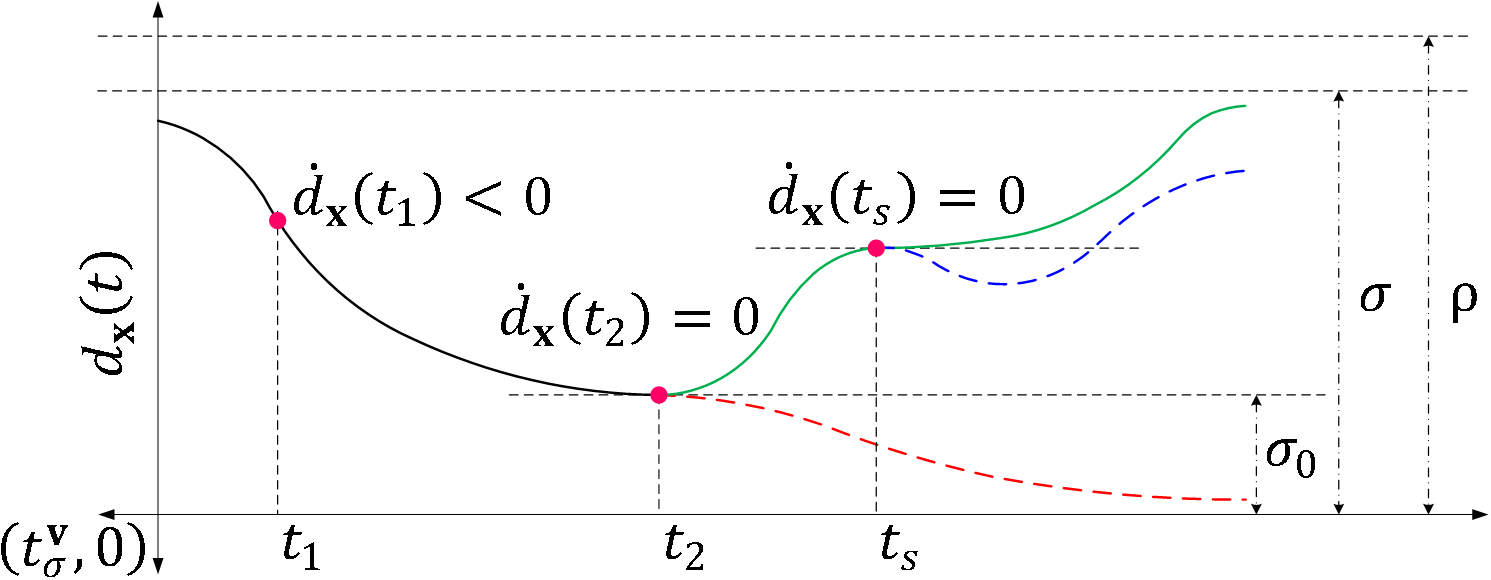}
    \caption{Illustration of the evolution of $d_{\mathbf{x}}(t)$ for $t \geq t_{\sigma}^{\mathbf{v}}$, where, after $t_2$, the green-colored portion of the trajectory is feasible, while the dashed portions are infeasible.}
    \label{no_convergence_to_0_image}
\end{figure}

The proof of the first part is similar to the proof of Claim \ref{lemma:NF_claim1} of Lemma \ref{lemma:NF}, wherein $T$ is replaced by $\infty$. From \eqref{equation_after_integration_NF}, one has
\begin{equation}\label{claim2_proof_equation_after_integration_NF}
    \begin{aligned} k_d&\left(\ln(d_{\mathbf{x}}(t)) - \ln(d_{\mathbf{x}}({t_{1}}))\right) =  \dot{d}_{\mathbf{x}}({t_{1}}) -\dot{d}_{\mathbf{x}}(t) \\&-k_1\int_{t_{1}}^t\eta(\mathbf{x})^\top\nabla_{\mathbf{x}}\varphi(\mathbf{x}) d\tau + \int_{t_{1}}^t\mathbf{v}^\top\mathbf{H}(\mathbf{x})\mathbf{v}d\tau.
    \end{aligned}
\end{equation}
We proceed by contradiction. Assume $\dot{d}_{\mathbf{x}}(t) <0$ for all $t\geq t_{1}$. This implies that $d_{\mathbf{x}}(t)$ strictly decreases for all $t\geq t_{1}$ and converges to $0$ as $t\to\infty$. 
Therefore, the left-hand side of \eqref{claim2_proof_equation_after_integration_NF} approaches $-\infty$ as $t\to\infty$.
We proceed to analyze the right-hand side of \eqref{claim2_proof_equation_after_integration_NF} as $t\to\infty$.

Since $\dot{d}_{\mathbf{x}}(t) <0$ for all $t\geq t_1$, $\dot{d}_{\mathbf{x}}(t)$ either is bounded from below or tends to $-\infty$ as $t\to\infty$. 
Additionally, since $d_{\mathbf{x}}(t_1)\in(0, \sigma]$, having $\dot{d}_{\mathbf{x}}(t) <0$ for all $t\geq t_1$ implies that $\mathbf{x}(t)\in\mathcal{N}_{\sigma}(\mathcal{X}_r^c)$ for all $t\in[t_{1}, \infty)$.
Therefore, since $\sigma \leq \delta_d$, using Condition \ref{assume:condition4} of Assumption \ref{assumption:common} one can confirm that $\Lim_{t\to\infty}\int_{t_1}^t-k_1\eta(\mathbf{x})^\top\nabla_{\mathbf{x}}\varphi(\mathbf{x})d\tau=\infty$.
Finally, we show that $\Lim_{t\to\infty}\int_{t_1}^t\mathbf{v}^\top\mathbf{H}(\mathbf{x})\mathbf{v}d\tau\ne-\infty$.

Since, according to Assumption \ref{assumption:conditions_on_unsafe_set}, the matrix $\mathbf{H}(\mathbf{x})$ is symmetric and bounded for all $\mathbf{x}\in\mathcal{N}_{\delta_u}(\mathcal{X}_r^c)$, $\delta_d\leq \delta_u$ as per Condition \ref{assume:condition4} of Assumption \ref{assumption:common}, and $\sigma \leq \rho\leq \delta_d$, one has
\begin{equation*}
\begin{aligned}
    \int_{t_1}^t\mathbf{v}^\top\mathbf{H}(\mathbf{x})\mathbf{v}d\tau &\geq\frac{|\lambda_{\mathbf{H}}^{\min}|}{k_d}\int_{t_1}^t-k_d\|\mathbf{v}\|^2d\tau,
    \end{aligned}
\end{equation*}
where $\lambda_{\mathbf{H}}^{\min}$ is the smallest eigenvalue of $\mathbf{H}(\mathbf{x})$ over $\mathcal{N}_{\sigma}(\mathcal{X}_r^c)$. 
Using \eqref{beta_function_definition} and \eqref{gradient_of_Vp}, it follows that
\begin{equation*}
\begin{aligned}
    \int_{t_1}^t\mathbf{v}^\top\mathbf{H}(\mathbf{x})\mathbf{v}d\tau \geq \frac{|\lambda_{\mathbf{H}}^{\min}|}{k_d}\left(V_d(t) - V_d(t_1)\right),
    \end{aligned}
\end{equation*}
with $V_d = k_1\varphi(\mathbf{x}) + \frac{1}{2}\|\mathbf{v}\|^2$, where $\varphi(\mathbf{x})$ is a known positive definite function with respect to $\mathbf{x}_d$, as defined in Problem \ref{problem_with_NF}.
Therefore, $\Lim_{t\to\infty}\int_{t_1}^t\mathbf{v}^\top\mathbf{H}(\mathbf{x})\mathbf{v}d\tau\ne-\infty$.
Consequently, one can conclude that the right-hand side of \eqref{claim2_proof_equation_after_integration_NF} approaches $\infty$ as $t\to\infty$, leading to a contradiction. 
As a result, $\dot{d}_{\mathbf{x}}(t)$ cannot remain negative for all $t\geq t_{1}$, implying the existence of $t_2> t_{1}$ such that $d_{\mathbf{x}}(t_2)\in(0, \sigma)$  and $\dot{d}_{\mathbf{x}}(t_2) = 0$.
This completes the proof of the first part.

\noindent{\bf Part 2:} We prove that after $t_2$, $\dot{d}_{\mathbf{x}}(t) \geq0$ as long as $\mathbf{x}(t)\in\mathcal{N}_{\sigma}(\mathcal{X}_r^c)$. 
    In other words, $d_{\mathbf{x}}(t)$ does not decrease after time $t_2$ as long as $\mathbf{x}(t)$ belongs to region $\mathcal{N}_{\sigma}(\mathcal{X}_r^c)$, as illustrated using the green-colored curve in Fig.~\ref{no_convergence_to_0_image}.
    This will imply that $d_{\mathbf{x}}(t)\notin(0, d_{\mathbf{x}}(t_2))$ for all $t\geq t_2$, thus ensuring $\Lim_{t\to\infty}d_{\mathbf{x}}(t) \ne 0$.

We now proceed to show that after $t_2$, $\dot{d}_{\mathbf{x}}(t)\geq 0$ as long as $\mathbf{x}(t)\in\mathcal{N}_{\sigma}(\mathcal{X}_r^c)$.
Specifically, we aim to prove that if $\dot{d}_{\mathbf{x}}(t_s) = 0$ and $d_{\mathbf{x}}(t_s)\in(0, \sigma]$ for any $t_s\geq t_2$, then $\ddot{d}_{\mathbf{x}}(t_s)\geq 0$.
As a result, since $d_{\mathbf{x}}(t_2)\in(0, \sigma)$ and $\dot{d}_{\mathbf{x}}(t_2) = 0$, it will imply that $d_{\mathbf{x}}(t)$ does not decrease after $t_2$ as long as $\mathbf{x}(t)\in\mathcal{N}_{\sigma}(\mathcal{X}_r^c)$, as represented using the green-colored curve in Fig.~\ref{no_convergence_to_0_image}, and the proof of Claim \ref{lemma:NF_claim2} will be complete.

Let $\dot{d}_{\mathbf{x}}(t_s) = 0$ and $d_{\mathbf{x}}(t_s)\in(0, \sigma]$ for some $t_s\geq t_2$.
According to \eqref{NF_distance_double_derivative}, one has
\begin{equation*}
    \ddot{d}_{\mathbf{x}}(t_s) = - k_1\eta(\mathbf{x}(t_s))^\top\nabla_{\mathbf{x}}\varphi(\mathbf{x}(t_s))+\mathbf{v}(t_s)^\top\mathbf{H}(\mathbf{x}(t_s))\mathbf{v}(t_s).
\end{equation*}
Since $\sigma\leq\rho\leq \delta_d\leq\delta_u$, using Assumptions \ref{assumption:conditions_on_unsafe_set} and \ref{assumption:common}, one can verify that 
\begin{equation}\label{ddot_ts}
    \ddot{d}_{\mathbf{x}}(t_s) \geq \mu -|\lambda_{\mathbf{H}}^{\min}|\|\mathbf{v}(t_s)\|^2,
\end{equation}
where $\lambda_{\mathbf{H}}^{\min}$ is the smallest eigenvalue of $\mathbf{H}(\mathbf{x})$ over $\mathcal{N}_{\sigma}(\mathcal{X}_r^c)$.
Since $t_s\geq t_2 > t_1\geq t_{\sigma}^{\mathbf{v}}$, $d_{\mathbf{x}}(t_s)\in(0, \sigma]$ and $\sigma \leq \rho \leq \epsilon_1$, as per Fact \ref{fact:eventual_boundedness_of_v} and \eqref{beta_function_definition}, one has $\|\mathbf{v}(t_s)\|^2 \leq \frac{D^2\sigma^2}{k_d^2}$.
Additionally, since $\sigma = \min\left\{\rho, \frac{k_d}{D}\sqrt{\frac{\mu}{|\lambda_{\mathbf{H}}^{\min}|}}\right\}$,
it follows that $\|\mathbf{v}(t_s)\|^2\leq \frac{\mu}{|\lambda_{\mathbf{H}}^{\min}|}$.
Consequently, using \eqref{ddot_ts}, it follows that $\ddot{d}_{\mathbf{x}}(t_s)\geq 0$, and the proof of the second part is complete.


\subsubsection{Proof of Claim \ref{lemma:NF_claim3}}
According to \eqref{gradient_of_Vp}, $\dot{V}_d = -k_d\beta(d_{\mathbf{x}})\|\mathbf{v}\|^2$, where $V_d = k_1\varphi(\mathbf{x}) + \frac{1}{2}\|\mathbf{v}\|^2$ and $\varphi(\mathbf{x})$ is a known positive definite function with respect to $\mathbf{x}_d$, as defined in Problem \ref{problem_with_NF}.
According to \eqref{beta_function_definition}, we know that $\beta(d_{\mathbf{x}}) \geq 1$ for all $\mathbf{x}\in\mathcal{X}_r^{\circ}$ and $\beta(d_{\mathbf{x}})$ is undefined only if $d_{\mathbf{x}} = 0$. 
Since $\mathbf{x}(0)\in\mathcal{X}_r^{\circ}$, Claims \ref{lemma:NF_claim1} and \ref{lemma:NF_claim2} of Lemma \ref{lemma:NF} imply that $\beta(d_{\mathbf{x}}(t))$ is bounded for all $t\geq 0$.
It follows that
\begin{equation}
\dot{V}_d(t) \leq -k_d\|\mathbf{v}(t)\|^2 \leq 0,\label{gradient_of_Vp_forall_time}
\end{equation}
for all $t\geq 0$.
Consequently, $\mathbf{v}(t)$ is bounded for all $t\geq 0$. 
Furthermore, according to Condition \ref{assume:condition5} of Assumption \ref{assumption:common}, $\nabla_{\mathbf{x}}\varphi(\mathbf{x})$ is bounded for all $\mathbf{x}\in\mathcal{X}_r^{\circ}$.
Additionally, since $\mathbf{x}(0)\in\mathcal{X}_r^{\circ}$, according to Claim \ref{lemma:NF_claim1} of Lemma \ref{lemma:NF}, $\mathbf{x}(t)\in\mathcal{X}_r^{\circ}$ for all $t\geq 0$.
Therefore, $\nabla_{\mathbf{x}}\varphi(\mathbf{x}(t))$ is bounded for all $t\geq 0$.
As a result, if $\mathbf{x}(0)\in\mathcal{X}_r^{\circ}$, then $\mathbf{u}_d(\mathbf{x}(t), \mathbf{v}(t))$, defined in \eqref{controller:NF}, is bounded for all $t\geq 0$.

\subsection{Proof of Theorem \ref{theorem:NF}}\label{proof:theorem:NF}
In the light of Lemma \ref{lemma:NF}, the forward invariance of $\mathcal{X}_r^{\circ}\times\mathbb{R}^n$ for the proposed closed-loop system \eqref{equation:second_order_system}-\eqref{controller:NF} is straightforward to establish.

\subsubsection{Proof of Claim \ref{theorem:NF_claim2}}

For the proposed closed-loop system \eqref{equation:second_order_system}-\eqref{controller:NF}, by setting $\mathbf{v} = \mathbf{0}$ and $\dot{\mathbf{v}} = \mathbf{0}$, and using Assumption \ref{assumption:common}, one can verify that the set of equilibrium points is $\mathcal{S}\cup(\mathbf{x}_d, \mathbf{0})$, where $\mathcal{S}$ is defined in \eqref{definition:undesired_target_set}.


\subsubsection{Proof of Claim \ref{theorem:NF_claim3}}
The proof is separated into two parts as follows:

\noindent{\bf Part 1:} We show that the set $\mathcal{S}\cup(\mathbf{x}_d, \mathbf{0})$ is globally attractive for the proposed closed-loop system \eqref{equation:second_order_system}-\eqref{controller:NF} over $\mathcal{X}_r^{\circ}\times\mathbb{R}^n$.
Specifically, we show that $\Lim_{t\to\infty}\mathbf{v}(t) = \mathbf{0}$ and $\Lim_{t\to\infty}\dot{\mathbf{v}}(t) = \mathbf{0}$.

    Since, according to Lemma \ref{lemma:NF}, if $d_{\mathbf{x}}(0)>0$, then $d_{\mathbf{x}}(t)>0$ for all $t\geq 0$ and $\Lim_{t\to\infty}d_{\mathbf{x}}(t)\ne 0$, it follows that $\beta(d_{\mathbf{x}}(t))$ is bounded for all $t\geq 0$.
    Therefore, if $\Lim_{t\to\infty}\mathbf{v}(t) = \mathbf{0}$, then $\Lim_{t\to\infty}\beta(d_{\mathbf{x}}(t))\mathbf{v}(t) = \mathbf{0}$.
    Consequently, using Claim \ref{theorem:NF_claim2} of Theorem \ref{theorem:NF}, one can verify that if $\Lim_{t\to\infty}\mathbf{v}(t) = \mathbf{0}$ and $\Lim_{t\to\infty}\dot{\mathbf{v}}(t) = \mathbf{0}$, then $\Lim_{t\to\infty}(\mathbf{x}(t), \mathbf{v}(t)) \in\mathcal{S}\cup(\mathbf{x}_d, \mathbf{0})$.

We proceed to prove the $\Lim_{t\to\infty}\mathbf{v}(t) = \mathbf{0}$ and $\Lim_{t\to\infty}\dot{\mathbf{v}}(t) = \mathbf{0}$.
According to \eqref{gradient_of_Vp_forall_time}, one has
\begin{equation*}
\dot{V}_d(t) \leq -k_d\|\mathbf{v}(t)\|^2 \leq 0,
\end{equation*}
for all $t\geq 0$, where $V_d = k_1\varphi(\mathbf{x}) + \frac{1}{2}\|\mathbf{v}\|^2$ and $\varphi(\mathbf{x})$ is a known positive definite function with respect to $\mathbf{x}_d$, as defined in Problem \ref{problem_with_NF}.
Therefore, $\mathbf{v}(t)$ is bounded for all $t\geq 0$, and $\Lim_{t\to\infty}\int_{0}^t\|\mathbf{v}(\tau)\|^2d\tau$ exists.
Furthermore, since $\mathbf{x}(0)\in\mathcal{X}_r^{\circ}$, as per Claim \ref{lemma:NF_claim3} of Lemma \ref{lemma:NF}, $\mathbf{u}(\mathbf{x}(t), \mathbf{v}(t))$ is bounded for all $t\geq 0$.
This ensures uniform continuity of $\|\mathbf{v}(t) \|^2$ for all $t\geq 0$.
Consequently, by the virtue of Barbalat's lemma, one has $\Lim_{t\to\infty}\mathbf{v}(t)= \mathbf{0}$.

Next, to show that $\Lim_{t\to\infty}\dot{\mathbf{v}}(t)=\mathbf{0}$, we make use of the extension of Barbalat's lemma \cite[Lemma 1]{micaelli1993trajectory}, which is restated as follows:
\begin{lemma}\label{extension_of_Barbalat_lemma}
    Let $f(t)$ and $g(t)$ be two function from $\mathbb{R}_{\geq0}$ to $\mathbb{R}$ such that $f$ is differentiable and $g$ is uniformly continuous on $\mathbb{R}_{\geq0}$.
    If $\underset{t\to\infty}{\lim}f(t) = c$ and $\underset{t\to\infty}{\lim}\left(\dot{f}(t) - g(t)\right) = 0$, then $\underset{t\to\infty}{\lim}\dot{f}(t) = \underset{t\to\infty}{\lim}g(t) = 0$, where $c$ is a constant.
\end{lemma}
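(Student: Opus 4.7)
The plan is to reduce the statement to a classical Barbalat-style contradiction argument. Observe first that the two conclusions are not independent: once $\lim_{t\to\infty} g(t)=0$ is established, the identity $\dot{f}(t)=g(t)+\bigl(\dot{f}(t)-g(t)\bigr)$ together with the hypothesis $\lim_{t\to\infty}\bigl(\dot{f}(t)-g(t)\bigr)=0$ immediately yields $\lim_{t\to\infty}\dot{f}(t)=0$. So the whole substance of the lemma lies in proving $g(t)\to 0$, and the convergence hypothesis $f(t)\to c$ will be invoked only through the fact that $f(t_n+\delta)-f(t_n)\to 0$ for any fixed $\delta>0$ and sequence $t_n\to\infty$.

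To establish $g(t)\to 0$, I would proceed by contradiction. Assume $g(t)\not\to 0$; then there exist $\epsilon>0$ and a sequence $t_n\to\infty$ with $|g(t_n)|\ge 2\epsilon$. Using uniform continuity of $g$, pick $\delta>0$ such that $|g(t)-g(s)|<\epsilon$ whenever $|t-s|<\delta$. On each interval $[t_n,t_n+\delta]$, $g$ therefore retains the sign of $g(t_n)$ and satisfies $|g(\tau)|\ge\epsilon$, giving $\bigl|\int_{t_n}^{t_n+\delta} g(\tau)\,d\tau\bigr|\ge\epsilon\delta$. Now decompose $\dot{f}=g+(\dot{f}-g)$ and integrate over $[t_n,t_n+\delta]$. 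Since $\dot{f}-g\to 0$, for all sufficiently large $n$ the error integral is bounded by $\epsilon\delta/2$, and hence $|f(t_n+\delta)-f(t_n)|\ge\epsilon\delta/2$ for all such $n$. This contradicts the convergence $f(t)\to c$, which forces $f(t_n+\delta)-f(t_n)\to 0$.

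The only subtle ingredient is the use of uniform continuity of $g$ to pass from the purely sequential statement ``$|g(t_n)|$ stays bounded away from zero'' to the interval-wise statement ``$|g(\tau)|\ge\epsilon$ on $[t_n,t_n+\delta]$ with constant sign''; this is exactly the standard mechanism underpinning Barbalat's lemma, and it is where essentially all the work concentrates. Once that interval is in hand, the rest is a one-line triangle-inequality estimate, and no further regularity on $f$ beyond differentiability and the existence of its limit is needed. Notice also that no integrability of $g$ is assumed a priori — it is effectively produced from the boundedness of $f(t_n+\delta)-f(t_n)$, which is why this extension is strictly more flexible than the classical Barbalat's lemma requiring $\int_0^\infty g\,d\tau$ to converge.
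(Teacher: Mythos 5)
Your argument is correct, but note that the paper does not actually prove this lemma: it is restated verbatim from the cited reference (Lemma~1 of the Micaelli--Samson report) and used as a black box, so your proposal supplies a proof the paper omits. What you give is the standard Barbalat-type contradiction argument, and the logic is sound: uniform continuity of $g$ turns the sequential lower bound $|g(t_n)|\ge 2\epsilon$ into a sign-constant bound $|g(\tau)|\ge\epsilon$ on $[t_n,t_n+\delta]$, hence $\bigl|\int_{t_n}^{t_n+\delta}g\,d\tau\bigr|\ge\epsilon\delta$, while $\dot f-g\to 0$ makes the error integral eventually smaller than $\epsilon\delta/2$, contradicting $f(t_n+\delta)-f(t_n)\to 0$; the reduction $\dot f=g+(\dot f-g)$ then gives $\dot f\to 0$. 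Two small points worth tightening. First, your step $f(t_n+\delta)-f(t_n)=\int_{t_n}^{t_n+\delta}\dot f\,d\tau$ implicitly assumes the fundamental theorem of calculus applies to the merely differentiable $f$; this is fine because on the intervals you use, $\dot f=g+(\dot f-g)$ is bounded ($g$ is continuous and the error term is eventually small), so $f$ is Lipschitz there, but you should say so --- or avoid the issue altogether by applying the mean value theorem to $h(t)=f(t)-\int_0^t g(\tau)\,d\tau$, whose derivative $\dot f-g$ tends to zero, which yields $\int_{t_n}^{t_n+\delta}g\,d\tau\to 0$ directly. Second, uniform continuity gives $|g(\tau)-g(t_n)|<\epsilon$ only for $|\tau-t_n|<\delta$; either work on $[t_n,t_n+\delta/2]$ or note the bound extends to the closed interval by continuity. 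Neither issue affects the validity of the approach.
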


Note that Lemma \ref{extension_of_Barbalat_lemma}, which is applicable to scalar-valued functions, is being applied elementwise to the vector-valued functions $\mathbf{v}(t)$ and $-k_1\nabla_{\mathbf{x}}\varphi(\mathbf{x}(t))$.
According to Lemma \ref{lemma:NF}, if $d_{\mathbf{x}}(0) > 0$, then $d_{\mathbf{x}}(t)>0$ for all $t\geq 0$, and $ \Lim_{t\to\infty}d_{\mathbf{x}}(t) \ne 0$.
Therefore, $\beta(d_{\mathbf{x}}(t))$ is bounded for all $t\geq 0$.
Consequently, $\Lim_{t\to\infty}\mathbf{v}(t)=\mathbf{0}$ implies that $\Lim_{t\to\infty}\beta(d_{\mathbf{x}}(t))\mathbf{v}(t) = \mathbf{0}$.
Moreover, since $\mathcal{X}_r^{\circ}\times\mathbb{R}^n$ is forward invariant for the proposed closed-loop system \eqref{equation:second_order_system}-\eqref{controller:NF}, and $\nabla_{\mathbf{x}}\varphi(\mathbf{x})$ is assumed to be uniformly continuous for all $\mathbf{x}\in\mathcal{X}_r^{\circ}$, it follows that $\nabla_{\mathbf{x}}\varphi(\mathbf{x}(t))$ is uniformly continuous for all $t\geq 0$. 
Therefore, according to Lemma \ref{extension_of_Barbalat_lemma}, $\Lim_{t\to\infty}\dot{\mathbf{v}}(t)=\mathbf{0}$,
and the proof of the first part is complete.

\noindent{\bf Part 2:} We show that for the proposed closed-loop system \eqref{equation:second_order_system}-\eqref{controller:NF}, every point in $\mathcal{S}$ is an undesired saddle equilibrium, and $(\mathbf{x}_d, \mathbf{0})$ is an asymptotically stable equilibrium point. 
    
    
To analyze the properties of the equilibrium points in $\mathcal{S}\cup(\mathbf{x}_d, \mathbf{0})$, we examine the eigenvalues of the Jacobian matrices of the proposed closed-loop system \eqref{equation:second_order_system}-\eqref{controller:NF} at these points.
The Jacobian matrix $\mathbf{J}_d(\mathbf{x}, \mathbf{v})$ is given by
\begin{equation*}
    \mathbf{J}_d(\mathbf{x}, \mathbf{v})  = \begin{bmatrix}
    \mathbf{0}_n &\mathbf{I}_{n}\\
    \frac{\partial\mathbf{u}_d(\mathbf{x}, \mathbf{v})}{\partial\mathbf{x}} &\frac{\partial\mathbf{u}_d(\mathbf{x}, \mathbf{v})}{\partial\mathbf{v}}
    \end{bmatrix},
\end{equation*}
where
\begin{equation*}
    \frac{\partial\mathbf{u}_d(\mathbf{x}, \mathbf{v})}{\partial\mathbf{x}} = -k_1\nabla_{\mathbf{x}}^2\varphi(\mathbf{x}) - k_d\mathbf{v}\nabla_{\mathbf{x}}\beta(d_{\mathbf{x}})^\top,
\end{equation*}
and
\begin{equation*}
    \frac{\partial\mathbf{u}_d(\mathbf{x}, \mathbf{v})}{\partial\mathbf{v}} = -k_d\beta(d_{\mathbf{x}})\mathbf{I}_n.
\end{equation*}
For $(\mathbf{x}^*, \mathbf{0})\in\mathcal{S}\cup(\mathbf{x}_d, \mathbf{0})$, the Jacobian matrix $\mathbf{J}_d(\mathbf{x}^*, \mathbf{0})$ is given by
\begin{equation*}
    \mathbf{J}_d(\mathbf{x}^*, \mathbf{0}) = \begin{bmatrix}\mathbf{0}_n &\mathbf{I}_n\\-k_1\nabla_{\mathbf{x}}^2\varphi(\mathbf{x}^*) &-k_d\beta(d_{\mathbf{x}^*})\mathbf{I}_n\end{bmatrix},
\end{equation*}
where $d_{\mathbf{x}^*} = d(\mathbf{x}^*, \mathcal{O}_{\mathcal{W}}) - r$.

Let $\lambda$ denote the eigenvalues of $\mathbf{J}_d(\mathbf{x}^*, \mathbf{0})$. 
The matrix $\mathbf{J}_{d\lambda} = \mathbf{J}_d -\lambda\mathbf{I}_{n}$ is given by
\begin{equation*}
    \mathbf{J}_{d\lambda} = \begin{bmatrix}-\lambda\mathbf{I}_n &\mathbf{I}_n\\-k_1\nabla_{\mathbf{x}}^2\varphi(\mathbf{x}^*) &-(\lambda + k_d\beta(d_{\mathbf{x}^*}))\mathbf{I}_n\end{bmatrix}.
\end{equation*}
To proceed with the proof, we use \cite[Fact 2.14.13]{bernstein2009matrix}, which is restated as a fact below:
\begin{fact}\label{fact_block_matrix_determinant}
    If $\mathbf{M} = \begin{bmatrix}\mathbf{A} &\mathbf{B}\\\mathbf{C} &\mathbf{D}\end{bmatrix}$, where $\mathbf{A}, \mathbf{B}, \mathbf{C}, \mathbf{D}\in\mathbb{R}^{n\times n}$, and $\mathbf{A}\mathbf{C} = \mathbf{C}\mathbf{A}$, then
    \begin{equation*}
        \det(\mathbf{M}) = \det(\mathbf{A}\mathbf{D} - \mathbf{C}\mathbf{B}).
    \end{equation*}
\end{fact}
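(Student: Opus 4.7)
The plan is to handle Fact \ref{fact_block_matrix_determinant} in two stages: first the case where $\mathbf{A}$ is invertible, via block Gaussian elimination, and then the general case via a standard density argument.

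When $\mathbf{A}$ is invertible, I would left-multiply $\mathbf{M}$ by the unit-determinant block lower-triangular matrix
\begin{equation*}
\mathbf{L} = \begin{bmatrix} \mathbf{I}_n & \mathbf{0}_n \\ -\mathbf{C}\mathbf{A}^{-1} & \mathbf{I}_n \end{bmatrix},
\end{equation*}
which produces a block upper-triangular matrix whose diagonal blocks are $\mathbf{A}$ and the Schur complement $\mathbf{D} - \mathbf{C}\mathbf{A}^{-1}\mathbf{B}$. Since $\det(\mathbf{L}) = 1$, this yields $\det(\mathbf{M}) = \det(\mathbf{A})\det(\mathbf{D} - \mathbf{C}\mathbf{A}^{-1}\mathbf{B}) = \det(\mathbf{A}\mathbf{D} - \mathbf{A}\mathbf{C}\mathbf{A}^{-1}\mathbf{B})$, where the last step merges the two determinants using the product rule together with the fact that $\mathbf{A}$ can be pulled inside. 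The commutation hypothesis $\mathbf{A}\mathbf{C} = \mathbf{C}\mathbf{A}$ then gives $\mathbf{A}\mathbf{C}\mathbf{A}^{-1} = \mathbf{C}\mathbf{A}\mathbf{A}^{-1} = \mathbf{C}$, so $\det(\mathbf{M}) = \det(\mathbf{A}\mathbf{D} - \mathbf{C}\mathbf{B})$, as claimed.

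For the general case, I would replace $\mathbf{A}$ with $\mathbf{A} + \epsilon \mathbf{I}_n$. This perturbation is invertible for all but at most $n$ real values of $\epsilon$ (the negatives of the real eigenvalues of $\mathbf{A}$), and it still commutes with $\mathbf{C}$ since $\mathbf{I}_n$ does. Applying the invertible case with this perturbation shows that
\begin{equation*}
\det\begin{bmatrix} \mathbf{A} + \epsilon\mathbf{I}_n & \mathbf{B} \\ \mathbf{C} & \mathbf{D} \end{bmatrix} = \det\bigl((\mathbf{A} + \epsilon\mathbf{I}_n)\mathbf{D} - \mathbf{C}\mathbf{B}\bigr)
\end{equation*}
holds on a cofinite subset of $\mathbb{R}$. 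Both sides are polynomials in $\epsilon$, so they agree identically on $\mathbb{R}$; in particular, setting $\epsilon = 0$ gives the desired identity.

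The only subtle step is the use of the commutation hypothesis to collapse $\mathbf{A}(\mathbf{D} - \mathbf{C}\mathbf{A}^{-1}\mathbf{B})$ into $\mathbf{A}\mathbf{D} - \mathbf{C}\mathbf{B}$; without $\mathbf{A}\mathbf{C} = \mathbf{C}\mathbf{A}$, one remains with the Schur-complement form $\det(\mathbf{A})\det(\mathbf{D} - \mathbf{C}\mathbf{A}^{-1}\mathbf{B})$, which in general cannot be recast as the determinant of a single polynomial expression in the four blocks. The density/continuity argument for singular $\mathbf{A}$ is routine and presents no real obstacle.
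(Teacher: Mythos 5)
Your proof is correct, but there is nothing in the paper to compare it against: the paper does not prove this statement at all, it simply imports it as \cite[Fact 2.14.13]{bernstein2009matrix} and uses it as a black box in the proofs of Theorems \ref{theorem:NF} and \ref{theorem:VF}. Your argument is the classical self-contained derivation of the commuting-block determinant identity. The invertible case is handled exactly right: left-multiplying by the unit-determinant factor $\mathbf{L}$ gives $\det(\mathbf{M}) = \det(\mathbf{A})\det(\mathbf{D} - \mathbf{C}\mathbf{A}^{-1}\mathbf{B}) = \det(\mathbf{A}\mathbf{D} - \mathbf{A}\mathbf{C}\mathbf{A}^{-1}\mathbf{B})$, and the hypothesis $\mathbf{A}\mathbf{C} = \mathbf{C}\mathbf{A}$ collapses $\mathbf{A}\mathbf{C}\mathbf{A}^{-1}$ to $\mathbf{C}$. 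The reduction of the singular case to the invertible one is also sound: $\mathbf{A} + \epsilon\mathbf{I}_n$ still commutes with $\mathbf{C}$, it is invertible for all but at most $n$ values of $\epsilon$, and both sides of the identity are polynomials in $\epsilon$ of degree at most $n$ that agree on a cofinite (hence infinite) set of real $\epsilon$, so they coincide at $\epsilon = 0$. Your closing remark about the role of commutativity is also accurate. In short, the proposal supplies a correct standard proof for a result the paper only cites, which makes the fact self-contained within the manuscript if one wished to include it.
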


Using Fact \ref{fact_block_matrix_determinant}, one can verify that
\begin{equation*}
\begin{aligned}
    \det(\mathbf{J}_{d\lambda}) & = (-1)^n\det\left(-k_1\nabla_{\mathbf{x}}^2\varphi(\mathbf{x}^*) -\theta\mathbf{I}_n \right),
    \end{aligned}
\end{equation*}
where $\theta = \lambda(\lambda + k_d\beta(d_{\mathbf{x}^*}))$.
Equating $\det(\mathbf{J}_{d\lambda}) = 0$ indicates that the eigenvalues of $\mathbf{J}_d(\mathbf{x}^*, \mathbf{0})$ satisfy the following quadratic equation:
\begin{equation}\label{quadratic_equation}
    \lambda^2 + k_d\beta(d_{\mathbf{x}^*})\lambda - \theta = 0,
\end{equation}
where $\theta$ represents the eigenvalues of $-k_1\nabla_{\mathbf{x}}^2\varphi(\mathbf{x}^*)$, with $\mathbf{x}^*\in\mathcal{E}\cup\{\mathbf{x}_d\}$.
The expression for $\lambda$ is given by
\begin{equation}\label{quadratic_solution}
    \lambda = \frac{-k_d\beta(d_{\mathbf{x}^*})\pm\sqrt{k_d^2\beta(d_{\mathbf{x}^*})^2 + 4\theta}}{2}.
\end{equation}

According to Condition \ref{assume:condition3} of Assumption \ref{assumption:common}, all eigenvalues of $-k_1\nabla_{\mathbf{x}}^2\varphi(\mathbf{x}^*)$ have non-zero real parts when $\mathbf{x}^*\in\mathcal{E}\cup\{\mathbf{x}_d\}$ \textit{i.e.}, $\mathbf{Re}(\theta)\in\mathbb{R}\setminus\{0\}$.
Additionally, according to Condition \ref{assume:condition2} of Assumption \ref{assumption:common}, $\mathbf{x}_d$ is almost globally asymptotically stable for the system $\dot{\mathbf{x}} = -k_1\nabla_{\mathbf{x}}\varphi(\mathbf{x})$ over $\mathcal{X}_r^{\circ}$. 
Therefore, all eigenvalues of $-k_1\nabla_{\mathbf{x}}^2\varphi(\mathbf{x}_d)$ have negative real parts.
As a result, since $k_d > |g_{\max}|/\sqrt{|r_{\max}|}$, and $\beta(d_{\mathbf{x}}) \geq 1$ for all $\mathbf{x}\in\mathcal{X}_r^{\circ}$, using \eqref{quadratic_solution}, one can verify through straightforwad calculations that all eigenvalues of $\mathbf{J}_d(\mathbf{x}_d, \mathbf{0})$ have negative real parts.
Consequently, $(\mathbf{x}_d, \mathbf{0})$ is an asymptotically stable equilibrium point for the closed-loop system \eqref{equation:second_order_system}-\eqref{controller:NF}.

On the other hand, for every $\mathbf{x}\in\mathcal{E}$, the matrix $-k_1\nabla_{\mathbf{x}}^2\varphi(\mathbf{x})$ has at least one eigenvalue with a positive real part and no eigenvalue with a zero real part.
Therefore, according to \eqref{quadratic_solution}, one can verify that when $\mathbf{x}\in\mathcal{E}$, the matrix $\mathbf{J}_d(\mathbf{x}, \mathbf{0})$ has at least one eigenvalue with a positive real part, at least one eigenvalue with a negative real part, and no eigenvalue with a zero real part.
As a result, every point in $\mathcal{S}$ is a saddle equilibrium for the closed-loop system \eqref{equation:second_order_system}-\eqref{controller:NF}.
This completes the proof of the second part.

The second part of the proof ensures that the set of initial conditions in $\mathcal{X}_r^{\circ}\times\mathbb{R}^n$ from which every solution to the closed-loop system \eqref{equation:second_order_system}-\eqref{controller:NF} converges to one of the equilibria in $\mathcal{S}$ has zero Lebesgue measure.
Thus, it follows from the first part that $(\mathbf{x}_d, \mathbf{0})$ is almost globally asymptotically stable for the closed-loop system \eqref{equation:second_order_system}-\eqref{controller:NF} over $\mathcal{X}_r^{\circ}\times\mathbb{R}^n$, and the proof of Claim \ref{theorem:NF_claim3} of Theorem \ref{theorem:NF} is complete.

\subsection{Proof of Lemma \ref{lemma:VF}}\label{proof:lemma:VF}

\subsubsection{Proof of Claim \ref{lemma:VF_claim1}}
The proof is inspired by the proof of Claim 1 of \cite[Lemma 1]{tang2023constructive}. 
We proceed by contradiction.
Assume that there exists finite $T>0$ such that $d_{\mathbf{x}}(T) = 0$.
This implies the existence of $t_{1}\geq 0$ such that $t_1<T$, $d_{\mathbf{x}}(t_{1})\in(0, \rho]$, and $\dot{d}_{\mathbf{x}}(t) < 0$ over the interval $[t_1, T)$, with $\rho = \min\{\delta_d, \epsilon_1\}$, where the existence of $\delta_d >0$ is assumed in Condition \ref{assume:condition4} of Assumption \ref{assumption:common} and $\epsilon_1$ is defined in \eqref{beta_function_definition}.

Since, according to Assumption \ref{assumption:common}, $\delta_d\leq \delta_u$, $\rho\in(0, \delta_d]$, and $\mathbf{x}(t)\in\mathcal{N}_{\rho}(\mathcal{X}_r^c)$ for all $t\in[t_1, T)$, one has
\begin{equation}\label{dot_dx_VF}
    \dot{d}_{\mathbf{x}}(t) = \eta(\mathbf{x}(t))^\top\mathbf{v}(t),
\end{equation}
for all $t\in[t_1, T)$, where $\eta(\mathbf{x})$ is defined in \eqref{gradient_of_distance}.
Taking the time derivative of \eqref{dot_dx_VF} and using \eqref{controller:VF}, one obtains
\begin{equation*}
\begin{aligned}
    \ddot{d}_{\mathbf{x}}(t) = &- k_d\beta(d_{\mathbf{x}}(t))\dot{d}_{\mathbf{x}}(t) + k_d\beta(d_{\mathbf{x}}(t))\eta(\mathbf{x}(t))^\top\mathbf{v}_d(\mathbf{x}(t))\\ &+\alpha(\mathbf{x}(t), \mathbf{v}(t)),
    \end{aligned}
\end{equation*}
where $\mathbf{H}(\mathbf{x}(t)) = \nabla_{\mathbf{x}}^2d_{\mathbf{x}}(t)$ and 
\begin{equation}\label{FO_alpha_term_definition}
    \alpha(\mathbf{x}(t), \mathbf{v}(t)) =   \eta(\mathbf{x}(t))^\top\dot{\mathbf{v}}_d(t) + \mathbf{v}(t)^\top\mathbf{H}(\mathbf{x}(t))\mathbf{v}(t).
\end{equation}
Since, according to \eqref{beta_function_definition}, $\beta(d_{\mathbf{x}}) = d_{\mathbf{x}}^{-1}$ for all $\mathbf{x}\in\mathcal{N}_{\rho}(\mathcal{X}_r^c)$, it follows that
\begin{equation}\label{double_differentiation_equation_VF}
\begin{aligned}
    k_d\frac{\dot{d}_{\mathbf{x}}(t)}{d_{\mathbf{x}}(t)} = &-\ddot{d}_{\mathbf{x}}(t)+\frac{k_d}{d_{\mathbf{x}}(t)}\eta(\mathbf{x}(t))^\top\mathbf{v}_d(\mathbf{x}(t))\\  &+ \alpha(\mathbf{x}(t), \mathbf{v}(t)).
    \end{aligned}
\end{equation}

Integrating \eqref{double_differentiation_equation_VF} with respect to time from $t_{1}$ to $t$, one obtains
\begin{equation}\label{equation_after_integration_VF}
    \begin{aligned}k_d&\left(\ln(d_{\mathbf{x}}(t)) - \ln(d_{\mathbf{x}}({t_{1}}))\right) =  \dot{d}_{\mathbf{x}}({t_{1}}) - \dot{d}_{\mathbf{x}}(t)\\&+k_d\int_{t_{1}}^t\frac{\eta(\mathbf{x})^\top\mathbf{v}_d(\mathbf{x})}{d_{\mathbf{x}}} d\tau + \int_{t_{1}}^t\alpha(\mathbf{x}, \mathbf{v})d\tau.
    \end{aligned}
\end{equation}
As $t\to T$, the left-hand side of $\eqref{equation_after_integration_VF}$ approaches $-\infty$.
We proceed to analyze the right-hand side of $\eqref{equation_after_integration_VF}$ as $t\to T$.

Since $\dot{d}_{\mathbf{x}}(t)<0$ for all $t\in[t_1, T)$, $\dot{d}_{\mathbf{x}}(t)$ either is bounded from below or tends to $-\infty$ as $t\to T$. 
Additionally, since $\mathbf{x}(t)\in\mathcal{N}_{\rho}(\mathcal{X}_r^c)$ for all $t\in[t_{1}, T)$, by Condition \ref{assume:condition4} of Assumption \ref{assumption:common}, the inequality $\eta(\mathbf{x}(t))^\top\mathbf{v}_d(\mathbf{x}(t)) > 0$ holds for all $t\in[t_{1}, T)$.
Now, if one shows that $\Lim_{t\to T}\int_{t_{1}}^t\alpha(\mathbf{x}, \mathbf{v})d\tau\ne-\infty$, then it will imply that the right-hand side of \eqref{equation_after_integration_VF} either remains bounded or tends to $\infty$ as $t\to T$, thereby leading to a contradiction.

Now, we evaluate $\Lim_{t\to T}\int_{t_{1}}^t\alpha(\mathbf{x}, \mathbf{v}) d\tau$.
Define $V_v = \frac{1}{2}\|\mathbf{z}\|^2$, where $\mathbf{z} = \mathbf{v} -\mathbf{v}_d(\mathbf{x})$. 
Taking the time derivative and using \eqref{controller:VF}, one gets
\begin{equation}\label{gradient_of_Vd}
    \dot{V}_v = -k_d\beta(d_{\mathbf{x}})\|\mathbf{z}\|^2.
\end{equation}
Since, according to \eqref{beta_function_definition}, $\beta(d_{\mathbf{x}}) \geq 1$ for all $\mathbf{x}\in\mathcal{X}_r^{\circ}$, and $\beta(d_{\mathbf{x}})$ is undefined only if $d_{\mathbf{x}} = 0$, it is true that $\dot{V}_v \leq 0$ as long as $d_{\mathbf{x}} > 0$. 
Additionally, according to Condition \ref{assume:condition5} of Assumption \ref{assumption:common}, $\mathbf{v}_d(\mathbf{x})$ is bounded for all $\mathbf{x}\in\mathcal{X}_r^{\circ}$.
Consequently, since $d_{\mathbf{x}} > 0$ for all $\mathbf{x}\in\mathcal{X}_r^{\circ}$, and $\mathbf{x}(t)\in\mathcal{N}_{\rho}(\mathcal{X}_r^c)\subset\mathcal{X}_r^{\circ}$ for all $t\in[t_1, T)$, it follows that $\mathbf{v}(t)$ is bounded for all $t\in[t_1, T)$.
Furthermore, by Assumption \ref{assumption:conditions_on_unsafe_set}, $\mathbf{H}(\mathbf{x}(t))$ is bounded for all $t\in[t_1, T)$. 
Moreover, since $\nabla_{\mathbf{x}}\mathbf{v}_d(\mathbf{x})$ is assumed to be bounded for all $\mathbf{x}\in\mathcal{X}_r^{\circ}$, one has $\nabla_{\mathbf{x}}\mathbf{v}_d(\mathbf{x}(t))$ bounded for all $t\in[t_1, T)$. 
Therefore, since $T$ is finite, $\Lim_{t\to T}\int_{t_1}^t\alpha(\mathbf{x}, \mathbf{v})d\tau\ne-\infty$,
and the proof of Claim \ref{lemma:VF_claim1} of Lemma \ref{lemma:VF} is complete.

\subsubsection{Proof of Claim \ref{lemma:VF_claim2}}
To proceed with the proof of Claim \ref{lemma:VF_claim2}, we require the following fact:
\begin{lemma}\label{fact_for_eventual_positive_distance}
    Consider the proposed closed-loop system \eqref{equation:second_order_system}-\eqref{controller:VF}, under Assumptions \ref{assumption:conditions_on_unsafe_set} and \ref{assumption:common}.
    Let $V_v(t) = \frac{1}{2}\|\mathbf{z}(t)\|^2$, where $\mathbf{z}(t) = \mathbf{v}(t)-\mathbf{v}_d(\mathbf{x}(t))$, then the following statements hold:
    \begin{enumerate}
        \item If $V_v(0) > 0$, then $V_v(t)$ is strictly decreasing for all $t\geq 0$ and $\Lim_{t\to\infty}V_v(t) = 0$.
        \item If $V_v(0)= 0$, then $V_v(t) = 0$ for all $t\geq 0$.
    \end{enumerate}
\end{lemma}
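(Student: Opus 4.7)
The plan is to obtain a scalar linear time-varying ODE for $V_d$ along solutions of the closed-loop system and then read both claims off its explicit solution. The key observation is that the feedforward term $\nabla_{\mathbf{x}}\mathbf{v}_d(\mathbf{x})^\top \mathbf{v}$ in \eqref{controller:VF} is inserted precisely to cancel $\dot{\mathbf{v}}_d(\mathbf{x})=\nabla_{\mathbf{x}}\mathbf{v}_d(\mathbf{x})^\top \mathbf{v}$ coming from the chain rule when we differentiate $\mathbf{z}=\mathbf{v}-\mathbf{v}_d(\mathbf{x})$. First I would differentiate $V_d$ along trajectories of \eqref{equation:second_order_system}–\eqref{controller:VF} and use this cancellation to recover
\begin{equation*}
\dot{V}_d(t) \;=\; -k_d\,\beta\!\bigl(d_{\mathbf{x}}(t)\bigr)\|\mathbf{z}(t)\|^2 \;=\; -2k_d\,\beta\!\bigl(d_{\mathbf{x}}(t)\bigr)\,V_d(t),
\end{equation*}
which is already recorded as \eqref{gradient_of_Vd} in the proof of Claim \ref{lemma:VF_claim1}.

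Next I would invoke Claim \ref{lemma:VF_claim1} of Lemma \ref{lemma:VF} to guarantee that $d_{\mathbf{x}}(t)>0$ for all $t\geq 0$, so that $\beta(d_{\mathbf{x}}(t))$ is well-defined along the trajectory and, by \eqref{beta_function_definition}, satisfies $\beta(d_{\mathbf{x}}(t))\geq 1$. Since the displayed identity is a scalar linear time-varying ODE in $V_d$ with a continuous, non-negative coefficient, its unique solution is
\begin{equation*}
V_d(t) \;=\; V_d(0)\,\exp\!\left(-2k_d\int_0^t \beta\!\bigl(d_{\mathbf{x}}(\tau)\bigr)\,d\tau\right),\qquad t\geq 0.
\end{equation*}

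From this explicit formula both claims follow immediately. For Part~1, if $V_d(0)>0$ then the exponential factor is strictly positive for all $t\geq 0$, so $V_d(t)>0$, and hence $\dot{V}_d(t)=-2k_d\beta(d_{\mathbf{x}}(t))V_d(t)<0$, giving strict monotonic decrease. Using $\beta(d_{\mathbf{x}}(t))\geq 1$ to lower-bound the exponent yields $V_d(t)\leq V_d(0)e^{-2k_d t}$, which forces $\lim_{t\to\infty}V_d(t)=0$. For Part~2, setting $V_d(0)=0$ in the explicit solution gives $V_d(t)=0$ for all $t\geq 0$.

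I do not anticipate a serious obstacle in this proof: the structural cancellation in \eqref{controller:VF} reduces the problem to a scalar linear ODE whose coefficient is bounded below by $1$, and the only nontrivial well-posedness issue, namely that $\beta(d_{\mathbf{x}}(t))$ is defined and integrable on every bounded interval, is already handled by Claim \ref{lemma:VF_claim1} of Lemma \ref{lemma:VF}.
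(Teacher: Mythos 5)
Your proposal is correct and follows essentially the same route as the paper: both differentiate $V_d$, use the cancellation of the feedforward term to obtain $\dot{V}_d=-k_d\beta(d_{\mathbf{x}})\|\mathbf{z}\|^2$ (the paper's \eqref{gradient_of_Vd}), invoke Claim \ref{lemma:VF_claim1} of Lemma \ref{lemma:VF} so that $\beta(d_{\mathbf{x}}(t))\geq 1$ is defined along the trajectory, and conclude via the exponential bound $V_d(t)\leq V_d(0)e^{-2k_dt}$. Writing the explicit solution of the scalar linear ODE is a cosmetic refinement (it makes the strict positivity of $V_d(t)$, hence strict decrease, and the $V_d(0)=0$ case fully explicit), but it is the same argument.
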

\begin{proof}
Taking the time derivative of $V_v$ and using \eqref{controller:VF}, one obtains
\begin{equation*}
    \dot{V}_v = -k_d\beta(d_{\mathbf{x}})\|\mathbf{z}\|^2.
\end{equation*}
According to \eqref{beta_function_definition}, $\beta(d_{\mathbf{x}}) \geq 1$ for all $\mathbf{x}\in\mathcal{X}_r^{\circ}$ and $\beta(d_{\mathbf{x}})$ is undefined only if $d_{\mathbf{x}} = 0$.
Since $\mathbf{x}(0)\in\mathcal{X}_r^{\circ}$, using Claim \ref{lemma:VF_claim1} of Lemma \ref{lemma:VF} and \eqref{distance_function_d_x}, one can confirm that $d_{\mathbf{x}}(t) > 0$ for all $t\geq 0$.
Therefore, $\beta(d_{\mathbf{x}}(t))$ is defined for all $t\geq 0$. 
It follows that
\begin{equation*}\label{FO_exponential_decrease}
    \dot{V}_v(t)  \leq -2k_dV_v(t)\leq 0, \forall t\geq 0.
\end{equation*}
Consequently, one has
\begin{equation*}\label{exponential_decrease_1}
   0\leq V_v(t)\leq V_v(0)e^{-2k_dt}, \forall t\geq 0.
\end{equation*}
From this, the claims follows, completing the proof.
\end{proof}

According to Condition \ref{assume:condition4} Assumption \ref{assumption:common}, there exist $\mu > 0$ and $\delta_d>0$ such that 
$\eta(\mathbf{x})^\top\mathbf{v}_d(\mathbf{x})\geq \mu$ for all $\mathbf{x}\in\mathcal{N}_{\delta_d}(\mathcal{X}_r^c)$.
Since $\eta(\mathbf{x})\in\mathbb{S}^{n-1}$, it follows that
$\mathcal{B}_{\mu}(\mathbf{v}_d(\mathbf{x}))\subset\mathcal{H}_{\geq}(\mathbf{0}, \eta(\mathbf{x}))$
for all $\mathbf{x}\in\mathcal{N}_{\delta_d}(\mathcal{X}_r^c)$, as shown in Fig. \ref{fig:inward_pointing_vector_field}.
According to Lemma \ref{fact_for_eventual_positive_distance}, for any $s\in(0, \mu)$, there exists $t_s\geq0$ such that $\mathbf{v}(t)\in\mathcal{B}_{s}(\mathbf{v}_d(\mathbf{x}(t)))\subset\mathcal{B}_{\mu}(\mathbf{v}_d(\mathbf{x}(t)))$ for all $t\geq t_s$, as illustrated in Fig. \ref{fig:inward_pointing_vector_field}.
Therefore, after $t_s$, whenever $\mathbf{x}(t)\in\mathcal{N}_{\delta_d}(\mathcal{X}_r^c)$, one has $\eta(\mathbf{x}(t))^\top\mathbf{v}(t)>0$.
Since $\delta_d \leq \delta_u$, it is true that $\dot{d}_{\mathbf{x}} = \eta(\mathbf{x})^\top\mathbf{v}$ for all $\mathbf{x}\in\mathcal{N}_{\delta_d}(\mathcal{X}_r^c)$, as discussed earlier in Remark \ref{remark:gradient_of_distance}. 
In other words, there exists a time $t_s\geq 0$ such that after $t_s$, the inequality $\dot{d}_{\mathbf{x}}(t) > 0$ holds whenever $\mathbf{x}(t)\in\mathcal{N}_{\delta_d}(\mathcal{X}_r^c)$.
Consequently, one can conclude that if $d_{\mathbf{x}}(0) > 0$, then $\Lim_{t\to\infty}d_{\mathbf{x}}(t)\neq0$, and the proof of Claim \ref{lemma:VF_claim2} of Lemma \ref{lemma:VF} is complete.

\begin{figure}
    \centering
    \includegraphics[width=0.7\linewidth]{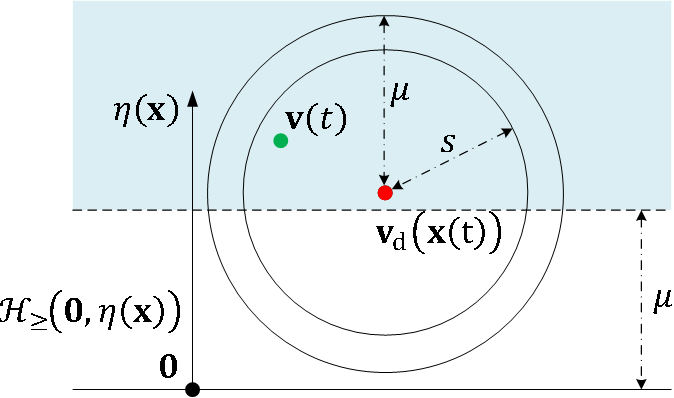}
    \caption{Diagrammatic representation of a scenario where $\mathbf{v}(t)\in\mathcal{B}_s(\mathbf{v}_d(\mathbf{x}(t))\subset\mathcal{B}_{\mu}(\mathbf{v}_d(\mathbf{x}(t))$ for some $s\in(0, \mu)$ and $t\geq t_s$.}
    \label{fig:inward_pointing_vector_field}
\end{figure}


\subsubsection{Proof of Claim \ref{lemma:VF_claim3}}
According to Claim \ref{lemma:VF_claim1} of Lemma \ref{lemma:VF}, if $\mathbf{x}(0)\in\mathcal{X}_r^{\circ}$, then $\mathbf{x}(t)\in\mathcal{X}_r^{\circ}$ for all $t\geq 0$.
Therefore, using condition \ref{assume:condition5} of Assumption \ref{assumption:common}, one can confirm that $\mathbf{v}_d(\mathbf{x}(t))$ is bounded for all $t\geq 0$.
As a result, according to Lemma \ref{fact_for_eventual_positive_distance}, $\mathbf{v}(t)$ is bounded for all $t\geq 0$.
Furthermore, since $\mathbf{x}(0)\in\mathcal{X}_r^{\circ}$, according to Claims \ref{lemma:VF_claim1} and \ref{lemma:VF_claim2} of Lemma \ref{lemma:VF}, it follows that $\beta(d_{\mathbf{x}}(t))$ is bounded for all $t\geq 0$.
Finally, since $\mathbf{x}(t)\in\mathcal{X}_r^{\circ}$ for all $t\geq 0$ and $\nabla_{\mathbf{x}}\mathbf{v}_d(\mathbf{x})$ is assumed to be bounded for all $\mathbf{x}\in\mathcal{X}_r^{\circ}$, it is clear that $\nabla_{\mathbf{x}}\mathbf{v}_d(\mathbf{x}(t))$ is bounded for all $t\geq 0$.
As a result, if $\mathbf{x}(0)\in\mathcal{X}_r^{\circ}$, then $\mathbf{u}_v(\mathbf{x}(t), \mathbf{v}(t))$, defined in \eqref{controller:VF}, is bounded for all $t\geq 0$.

\subsection{Proof of Theorem \ref{theorem:VF}}\label{proof:theorem_first_order_convergence}
In the light of Lemma \ref{lemma:VF}, the forward invariance of $\mathcal{X}_r^{\circ}\times\mathbb{R}^n$ for the closed-loop system \eqref{equation:second_order_system}-\eqref{controller:VF} is straightforward to establish.
The monotonic decrease of $\|\mathbf{v}(t)-\mathbf{v}_d(\mathbf{x}(t))\|$ for all $t\geq 0$ follows directly from Lemma \ref{fact_for_eventual_positive_distance}.


\subsubsection{Proof of Claim \ref{theorem:VF_claim2}}

For the proposed closed-loop system \eqref{equation:second_order_system}-\eqref{controller:VF}, by setting $\mathbf{v} = \mathbf{0}$ and $\dot{\mathbf{v}} = \mathbf{0}$, and using Assumption \ref{assumption:common}, one can verify that the set of equilibrium points is $\mathcal{S}\cup(\mathbf{x}_d, \mathbf{0})$, where $\mathcal{S}$ is defined in \eqref{definition:undesired_target_set}.


\subsubsection{Proof of Claim \ref{theorem:VF_claim3}}
The proof is separated into two parts as follows:

\noindent{\bf Part 1:} We show that the set $\mathcal{S}\cup(\mathbf{x}_d, \mathbf{0})$ is globally attractive for the proposed closed-loop system \eqref{equation:second_order_system}-\eqref{controller:VF} over $\mathcal{X}_r^{\circ}\times\mathbb{R}^n$.
Specifically, we show that $\Lim_{t\to\infty}(\mathbf{x}(t), \mathbf{v}(t))\in\mathcal{S}\cup(\mathbf{x}_d, \mathbf{0})$.

Lemma \ref{fact_for_eventual_positive_distance} indicates that $\Lim_{t\to\infty}\mathbf{v}(t)-\mathbf{v}_d(\mathbf{x}(t))=\mathbf{0}$.
Additionally, as mentioned earlier, $\mathbf{v}(t)$ is bounded for all $t\geq 0$. 
Since $\dot{\mathbf{x}} = \mathbf{v}$, boundedness of $\mathbf{v}(t)$ implies that $\mathbf{x}(t)$ cannot grow unbounded in finite time.
Furthermore, according to Condition \ref{assume:condition1} of Assumption \ref{assumption:common}, the set $\mathcal{E}\cup\{\mathbf{x}_d\}$ is globally attractive for the system $\dot{\mathbf{x}} = \mathbf{v}_d(\mathbf{x})$ over $\mathcal{X}_r^{\circ}$.
Consequently, since $\Lim_{t\to\infty}\mathbf{v}(t)-\mathbf{v}_d(\mathbf{x}(t))=\mathbf{0}$, it follows that $\Lim_{t\to\infty}\mathbf{x}(t)\in\mathcal{E}\cup\{\mathbf{x}_d\}$.
We also know that $\mathbf{v}_d(\mathbf{x}) = \mathbf{0}$ for all $\mathbf{x}\in\mathcal{E}\cup\{\mathbf{x}_d\}$.
Therefore, $\Lim_{t\to\infty}\mathbf{x}(t)\in\mathcal{E}\cup\{\mathbf{x}_d\}$ implies $\Lim_{t\to\infty}\mathbf{v}_d(\mathbf{x}(t)) = \mathbf{0}$.
Since $\Lim_{t\to\infty}\mathbf{v}(t)-\mathbf{v}_d(\mathbf{x}(t))=\mathbf{0}$ and $\Lim_{t\to\infty}\mathbf{v}_d(\mathbf{x}(t)) = \mathbf{0}$, it follows that $\Lim_{t\to\infty}\mathbf{v}(t) = \mathbf{0}$.
Since $\Lim_{t\to\infty}\mathbf{x}(t)\in\mathcal{E}\cup\{\mathbf{x}_d\}$ and $\Lim_{t\to\infty}\mathbf{v}(t) = \mathbf{0}$, by using \eqref{definition:undesired_target_set}, it follows that $\Lim_{t\to\infty}(\mathbf{x}(t), \mathbf{v}(t)) \in\mathcal{S}\cup(\mathbf{x}_d, \mathbf{0})$, and the proof of the first part is complete.

\noindent{\bf Part 2:} We show that for the closed-loop system \eqref{equation:second_order_system}-\eqref{controller:VF}, every point in $\mathcal{S}$ is an undesired saddle equilibrium, and $(\mathbf{x}_d, \mathbf{0})$ is an asymptotically stable equilibrium point.

To analyze the properties of the equilibrium points in $\mathcal{S}\cup(\mathbf{x}_d, \mathbf{0})$, we examine the eigenvalues of the Jacobian matrices of the proposed closed-loop system \eqref{equation:second_order_system}-\eqref{controller:VF} at these points.
The Jacobian matrix $\mathbf{J}_v(\mathbf{x}, \mathbf{v})$ is given by

\begin{equation*}
    \mathbf{J}_v(\mathbf{x}, \mathbf{v})  = \begin{bmatrix}
    \mathbf{0}_n &\mathbf{I}_{n}\\
    \frac{\partial\mathbf{u}_v(\mathbf{x}, \mathbf{v})}{\partial\mathbf{x}} &\frac{\partial\mathbf{u}_v(\mathbf{x}, \mathbf{v})}{\partial\mathbf{v}}
    \end{bmatrix},
\end{equation*}
where
\begin{equation*}
\frac{\partial\mathbf{u}_v(\mathbf{x},\mathbf{v})}{\partial\mathbf{x}} = k_d\beta(d_{\mathbf{x}})\nabla_{\mathbf{x}}\mathbf{v}_d(\mathbf{x})^\top - k_d(\mathbf{v} - \mathbf{v}_d(\mathbf{x}))\nabla_{\mathbf{x}}\beta(d_{\mathbf{x}})^\top,
\end{equation*}
and 
\begin{equation*}
 \frac{\partial\mathbf{u}_v(\mathbf{x}, \mathbf{v})}{\partial\mathbf{v}} = -k_d\beta(d_{\mathbf{x}})\mathbf{I}_n + \nabla_{\mathbf{x}}\mathbf{v}_d(\mathbf{x})^\top.
\end{equation*} 
According to Condition \ref{assume:condition1} of Assumption \ref{assumption:common}, for all $(\mathbf{x}^*, \mathbf{0})\in\mathcal{S}\cup(\mathbf{x}_d, \mathbf{0})$, one has $\mathbf{v}_d(\mathbf{x}^*) = \mathbf{0}$.
Therefore, $\mathbf{J}_v(\mathbf{x}^*, \mathbf{0})$ is given by
\begin{equation*}
    \mathbf{J}_v(\mathbf{x}^*, \mathbf{0}) = \begin{bmatrix}\mathbf{0}_n &\mathbf{I}_n\\k_d\beta(d_{\mathbf{x}^*})\nabla_{\mathbf{x}}\mathbf{v}_d(\mathbf{x}^*)^\top &\mathbf{D}^*\end{bmatrix},
\end{equation*}
where $d_{\mathbf{x}^*} = d(\mathbf{x}^*, \mathcal{O}_{\mathcal{W}}) - r$ and the matrix $\mathbf{D}^*$ is given by 
\begin{equation*}
    \mathbf{D}^* = -k_d\beta(d_{\mathbf{x}^*})\mathbf{I}_n + \nabla_{\mathbf{x}}\mathbf{v}_d(\mathbf{x}^*)^\top.
\end{equation*}

Let $\lambda$ denote the eigenvalues of $\mathbf{J}_v(\mathbf{x}^*, \mathbf{0})$. 
The matrix $\mathbf{J}_{v\lambda} = \mathbf{J}_v -\lambda\mathbf{I}_{n}$ is given by
\begin{equation*}
    \mathbf{J}_{v\lambda} = \begin{bmatrix}-\lambda\mathbf{I}_n &\mathbf{I}_n\\k_d\beta(d_{\mathbf{x}^*})\nabla_{\mathbf{x}}\mathbf{v}_d(\mathbf{x}^*)^\top &\mathbf{D}^*-\lambda\mathbf{I}_n\end{bmatrix}.
\end{equation*}
Using Lemma \ref{fact_block_matrix_determinant}, one can verify that
\begin{equation*}
\begin{aligned}
    \det(\mathbf{J}_{v\lambda}) = (-1)^n(\lambda + k_d\beta(d_{\mathbf{x}^*}))^{n}\det\left(\nabla_{\mathbf{x}}\mathbf{v}_d(\mathbf{x}^*)^\top - \lambda\mathbf{I}_n\right).
    \end{aligned}
\end{equation*}
Equating $\det(\mathbf{J}_{v\lambda}) = 0$ reveals that the eigenvalues of $\nabla_{\mathbf{x}}\mathbf{v}_d(\mathbf{x}^*)$ form a subset of the eigenvalues of $\mathbf{J}_v(\mathbf{x}^*, \mathbf{0})$, with the remaining eigenvalue being $-k_d\beta(d_{\mathbf{x}^*})$, with algebriac multiplicity $n$, where $\mathbf{x}^*\in\mathcal{E}\cup\{\mathbf{x}_d\}$.

According to Condition \ref{assume:condition3} of Assumption \ref{assumption:common}, for all $\mathbf{x}^*\in\mathcal{E}\cup\{\mathbf{0}\}$, the matrix $\nabla_{\mathbf{x}}\mathbf{v}_d(\mathbf{x}^*)$ has eigenvalues with non-zero real parts.
Therefore, for each $(\mathbf{x}^*, \mathbf{0})\in\mathcal{S}\cup(\mathbf{x}_d, \mathbf{0})$, the eigenvalues of Jacobian matrix $\mathbf{J}_v(\mathbf{x}^*, \mathbf{0})$ have non-zero real parts.


As per Condition \ref{assume:condition2} of Assumption \ref{assumption:common}, $\mathbf{x}_d$ is almost globally asymptotically stable for the system $\dot{\mathbf{x}} = \mathbf{v}_d$.
Therefore, for every $\mathbf{x}\in\mathcal{E}$, at least one of the eigenvalues of $\nabla_{\mathbf{x}}\mathbf{v}_d(\mathbf{x})$ has a positive real part.
Consequently, every point in $\mathcal{S}$ is a saddle equilibrium point for the proposed closed-loop system \eqref{equation:second_order_system}-\eqref{controller:VF}.
On the contrary, all eigenvalues of $\nabla_{\mathbf{x}}\mathbf{v}_d(\mathbf{x}_d)$ have negative real parts. 
Therefore, $(\mathbf{x}_d, \mathbf{0})$ is an asymptotically stable equilibrium point for the proposed closed-loop system \eqref{equation:second_order_system}-\eqref{controller:VF}.
This completes the proof of the second part.


The second part of the proof ensures that the set of initial conditions in the set $\mathcal{X}_r^{\circ}\times\mathbb{R}^n$ from which every solution to the closed-loop system \eqref{equation:second_order_system}-\eqref{controller:VF} converges to one of the equilibria in $\mathcal{S}$ has zero Lebesgue measure.
Thus, it follows from the first part that $(\mathbf{x}_d, \mathbf{0})$ is almost globally asymptotically stable for the closed-loop system \eqref{equation:second_order_system}-\eqref{controller:VF} over $\mathcal{X}_r^{\circ}\times\mathbb{R}^n$, and the proof of Claim \ref{theorem:VF_claim3} of Theorem \ref{theorem:VF} is complete.


\end{appendix}

 

\bibliographystyle{IEEEtran}
\bibliography{reference}

\begin{thebibliography}{10}
\providecommand{\url}[1]{#1}
\csname url@samestyle\endcsname
\providecommand{\newblock}{\relax}
\providecommand{\bibinfo}[2]{#2}
\providecommand{\BIBentrySTDinterwordspacing}{\spaceskip=0pt\relax}
\providecommand{\BIBentryALTinterwordstretchfactor}{4}
\providecommand{\BIBentryALTinterwordspacing}{\spaceskip=\fontdimen2\font plus
\BIBentryALTinterwordstretchfactor\fontdimen3\font minus \fontdimen4\font\relax}
\providecommand{\BIBforeignlanguage}[2]{{%
\expandafter\ifx\csname l@#1\endcsname\relax
\typeout{** WARNING: IEEEtran.bst: No hyphenation pattern has been}%
\typeout{** loaded for the language `#1'. Using the pattern for}%
\typeout{** the default language instead.}%
\else
\language=\csname l@#1\endcsname
\fi
#2}}
\providecommand{\BIBdecl}{\relax}
\BIBdecl

\bibitem{khatib1986real}
O.~Khatib, ``Real-time obstacle avoidance for manipulators and mobile robots,'' in \emph{Autonomous robot vehicles}.\hskip 1em plus 0.5em minus 0.4em\relax Springer, 1986, pp. 396--404.

\bibitem{koditschek1990robot}
D.~E. Koditschek and E.~Rimon, ``Robot navigation functions on manifolds with boundary,'' \emph{Advances in applied mathematics}, vol.~11, no.~4, pp. 412--442, 1990.

\bibitem{koditschek1992exact}
D.~Koditschek and E.~Rimon, ``Exact robot navigation using artificial potential functions,'' \emph{IEEE Trans. Robot. Automat}, vol.~8, pp. 501--518, 1992.

\bibitem{li2018navigation}
C.~Li and H.~G. Tanner, ``Navigation functions with time-varying destination manifolds in star worlds,'' \emph{IEEE Transactions on Robotics}, vol.~35, no.~1, pp. 35--48, 2018.

\bibitem{filippidis2012navigation}
I.~F. Filippidis and K.~J. Kyriakopoulos, ``Navigation functions for everywhere partially sufficiently curved worlds,'' in \emph{IEEE International Conference on Robotics and Automation}, 2012, pp. 2115--2120.

\bibitem{paternain2017navigation}
S.~Paternain, D.~E. Koditschek, and A.~Ribeiro, ``Navigation functions for convex potentials in a space with convex obstacles,'' \emph{IEEE Transactions on Automatic Control}, vol.~63, no.~9, pp. 2944--2959, 2017.

\bibitem{arslan2019sensor}
O.~Arslan and D.~E. Koditschek, ``Sensor-based reactive navigation in unknown convex sphere worlds,'' \emph{The International Journal of Robotics Research}, vol.~38, no. 2-3, pp. 196--223, 2019.

\bibitem{kumar2022navigation}
H.~Kumar, S.~Paternain, and A.~Ribeiro, ``Navigation of a quadratic potential with ellipsoidal obstacles,'' \emph{Automatica}, vol. 146, p. 110643, 2022.

\bibitem{wang2017safety}
L.~Wang, A.~D. Ames, and M.~Egerstedt, ``Safety barrier certificates for collisions-free multirobot systems,'' \emph{IEEE Transactions on Robotics}, vol.~33, no.~3, pp. 661--674, 2017.

\bibitem{verginis2021adaptive}
C.~K. Verginis and D.~V. Dimarogonas, ``Adaptive robot navigation with collision avoidance subject to 2nd-order uncertain dynamics,'' \emph{Automatica}, vol. 123, p. 109303, 2021.

\bibitem{stavridis2017dynamical}
S.~Stavridis, D.~Papageorgiou, and Z.~Doulgeri, ``Dynamical system based robotic motion generation with obstacle avoidance,'' \emph{IEEE Robotics and Automation Letters}, vol.~2, no.~2, pp. 712--718, 2017.

\bibitem{smaili2024dissipative}
L.~Smaili, Z.~Tang, S.~Berkane, and T.~Hamel, ``Dissipative avoidance feedback for reactive navigation under second-order dynamics,'' \emph{arXiv preprint arXiv:2410.02903}, 2024.

\bibitem{arslan2017smooth}
O.~Arslan and D.~E. Koditschek, ``Smooth extensions of feedback motion planners via reference governors,'' in \emph{2017 IEEE International Conference on Robotics and Automation (ICRA)}.\hskip 1em plus 0.5em minus 0.4em\relax IEEE, 2017, pp. 4414--4421.

\bibitem{icsleyen2022low}
A.~{\.I}{\c{s}}leyen, N.~Van De~Wouw, and {\"O}.~Arslan, ``From low to high order motion planners: Safe robot navigation using motion prediction and reference governor,'' \emph{IEEE Robotics and Automation Letters}, vol.~7, no.~4, pp. 9715--9722, 2022.

\bibitem{rataj2019curvature}
J.~Rataj and M.~Z{\"a}hle, \emph{Curvature measures of singular sets}.\hskip 1em plus 0.5em minus 0.4em\relax Springer, 2019.

\bibitem{pontryagin1962}
L.~Pontryagin, \emph{Ordinary Differential Equations}.\hskip 1em plus 0.5em minus 0.4em\relax Pergamon, 1962.

\bibitem{berkane2021obstacle}
S.~Berkane, A.~Bisoffi, and D.~V. Dimarogonas, ``Obstacle avoidance via hybrid feedback,'' \emph{IEEE Transactions on Automatic Control}, vol.~67, no.~1, pp. 512--519, 2021.

\bibitem{sawant2025hybrid}
M.~Sawant, I.~Polushin, and A.~Tayebi, ``Hybrid feedback for three-dimensional convex obstacle avoidance,'' in \emph{2025 American Control Conference (ACC)}.\hskip 1em plus 0.5em minus 0.4em\relax IEEE, 2025, pp. 4665--4670.

\bibitem{filippidis2013navigation}
I.~Filippidis and K.~J. Kyriakopoulos, ``Navigation functions for focally admissible surfaces,'' in \emph{American Control Conference}, 2013, pp. 994--999.

\bibitem{tang2023constructive}
Z.~Tang, R.~Cunha, T.~Hamel, and C.~Silvestre, ``Constructive barrier feedback for collision avoidance in leader-follower formation control,'' in \emph{2023 62nd IEEE Conference on Decision and Control (CDC)}.\hskip 1em plus 0.5em minus 0.4em\relax IEEE, 2023, pp. 368--374.

\bibitem{micaelli1993trajectory}
A.~Micaelli and C.~Samson, ``Trajectory tracking for unicycle-type and two-steering-wheels mobile robots,'' Ph.D. dissertation, Inria, 1993.

\bibitem{bernstein2009matrix}
D.~S. Bernstein, \emph{Matrix mathematics: theory, facts, and formulas}.\hskip 1em plus 0.5em minus 0.4em\relax Princeton university press, 2009.

\end{thebibliography}
\end{document}